\renewcommand{\bar}{\overline}
\newtheorem{proposition}{Proposition}[section]
\newtheorem{principle}{Remark}[section]
\theoremstyle{definition}
\newtheorem*{proposition*}{Proposition}
\def\eqref#1{equation~\ref{#1}}
\def\1{\bm{1}}
\def\eps{{\epsilon}}
\DeclareMathAlphabet{\mathsfit}{\encodingdefault}{\sfdefault}{m}{sl}
\SetMathAlphabet{\mathsfit}{bold}{\encodingdefault}{\sfdefault}{bx}{n}
\def\gA{{\mathcal{A}}}
\def\gC{{\mathcal{C}}}
\def\gD{{\mathcal{D}}}
\def\gH{{\mathcal{H}}}
\def\gL{{\mathcal{L}}}
\def\gM{{\mathcal{M}}}
\def\gO{{\mathcal{O}}}
\def\gP{{\mathcal{P}}}
\def\gS{{\mathcal{S}}}
\def\gX{{\mathcal{X}}}
\def\gY{{\mathcal{Y}}}
\newcommand{\E}{\mathbb{E}}
\newcommand{\R}{\mathbb{R}}
\DeclareMathOperator*{\argmax}{arg\,max}
\newcommand{\ctrain}{\gC_{\text{train}}}
\newcommand{\po}{\text{po}}
\DeclareMathOperator\supp{supp}
\title{
Why Generalization in RL is Difficult: Epistemic POMDPs and Implicit Partial Observability
}
\author{
Dibya Ghosh\thanks{Contributed equally. Correspond to: \texttt{dibya@berkeley.edu},  \texttt{jrahme@math.princeton.edu}}
\\UC Berkeley \And Jad Rahme$^*$ \\ Princeton University \And Aviral Kumar \\ UC Berkeley \And Amy Zhang \\ UC Berkeley\\Facebook AI Research \AND Ryan P. Adams \\ Princeton University \And Sergey Levine \\ UC Berkeley
}
\begin{document}

\maketitle
\begin{abstract}
Generalization is a central challenge for the deployment of reinforcement learning (RL) systems in the real world. In this paper, we show that the sequential structure of the RL problem necessitates new approaches to generalization beyond the well-studied techniques used in supervised learning. While supervised learning methods can generalize effectively without explicitly accounting for epistemic uncertainty, we show that, perhaps surprisingly, this is not the case in RL. We show that generalization to unseen test conditions from a limited number of training conditions induces implicit partial observability, effectively turning even fully-observed MDPs into POMDPs. Informed by this observation, we recast the problem of generalization in RL as solving the induced partially observed Markov decision process, which we call the epistemic POMDP. We demonstrate the failure modes of algorithms that do not appropriately handle this partial observability, and suggest a simple ensemble-based technique for approximately solving the partially observed problem. Empirically, we demonstrate that our simple algorithm derived from the epistemic POMDP achieves significant gains in generalization over current methods on the Procgen benchmark suite.

\end{abstract}

\section{Introduction}
Generalization is a central challenge in machine learning. However, much of the research on reinforcement learning (RL) has been concerned with the problem of optimization: how to master a specific task through online or logged interaction.  Generalization to new test-time contexts
has received comparatively less attention, although
several works have observed empirically  \citep{Farebrother2018GeneralizationAR, Zhang2018ASO, Justesen2018IlluminatingGI, Song2020ObservationalOI} that generalization to new situations poses a significant challenge to RL policies learned from a fixed training set of situations. 
In standard supervised learning, it is known that in the absence of distribution shift and with appropriate inductive biases, optimizing for performance on the training set (i.e., empirical risk minimization) translates into
good generalization performance. 
It is tempting to suppose that the generalization challenges in RL can be solved in the same manner as empirical risk minimization in supervised learning: when provided a training set of contexts,
learn the optimal policy within these contexts and then use that policy in new contexts at test-time.

Perhaps surprisingly, we show that such ``empirical risk minimization'' approaches can be sub-optimal for generalizing to new contexts in RL, even when these new contexts are drawn from the same distribution as the training contexts. As an anecdotal example of why this sub-optimality arises, imagine a robotic zookeeper for feeding otters that must be trained on some set of zoos. When placed in a new zoo, the robot must find and enter the otter enclosure. It can use one of two strategies: either peek through all the habitat windows looking for otters, which succeeds with 95\% probability in all zoos, or to follow an image of a hand-drawn map of the zoo that unambiguously identifies the otter enclosure, which will succeed as long as the agent is able to successfully parse the image. In every training zoo, the otters can be found more reliably using the image of the map, and so an agent trained to seek the optimal policy in the training zoos would learn a classifier to predict the identity of the otter enclosure from the map, and enter the predicted enclosure. This classification strategy is optimal on the training environments because the agent can learn to perfectly classify the training zoo maps, but it is \textit{sub-optimal} for generalization, because the learned classifier will never be able to perfectly classify every new zoo map at test-time.
Note that this task is \emph{not} partially observed, because the map provides full state information even for a memoryless policy.
However, if the learned map classifier succeeds on anything less than 95\% of new zoos at test-time, the strategy of peeking through the windows, although always sub-optimal in the training environments, turns out to be a more reliable strategy for finding the otter habitat in a \textit{new} zoo, and results in higher expected returns at test-time. %

\begin{figure}
    \centering
    \includegraphics[width=\linewidth]{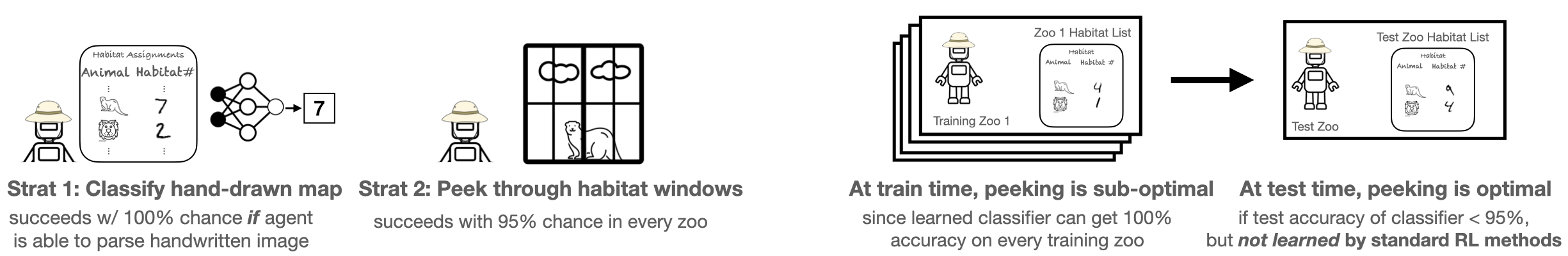}
        \vspace{-1em}
    \caption{\footnotesize{\textbf{Visualization of the robotic zookeeper example.} Standard RL algorithms learn the classifier strategy, since it is optimal in every training zoo, but this strategy is sub-optimal for generalization because peeking generalizes better than the classifier at test-time.  This failure occurs due to the following disconnect: while the task is \textit{fully-observed} since the image uniquely specifies the location of the otter habitat, to an agent that has limited training data, the location is \textit{implicitly partially observed at test-time} because of the agent's epistemic uncertainty about the parameters of the image classifier.}}
    \label{fig:my_label}
    \vspace{-2em}
\end{figure}

Although with enough training zoos, the zookeeper can learn a policy by solving the map classification problem, to generalize optimally when given a limited number of zoos requires a more intricate policy that is not learned by standard RL methods. %
How can we more generally describe the set of behaviors needed for a policy to generalize from a finite number of training contexts in the RL setting? We make the observation that, even in fully-observable domains, the agent's epistemic uncertainty renders the environment \textit{implicitly} partially observed at test-time. In the zookeeper example, although the hand-drawn map provides the exact location of the otter enclosure (and so the enclosure's location is technically fully observed), the agent cannot identify the true parameters of the map classifier from the small set of maps seen at training time, and so the location of the otters is implicitly obfuscated from the agent. %
We formalize this observation, and show that generalizing optimally at test-time corresponds to solving a partially-observed Markov decision process that we call an \textbf{epistemic POMDP}, which is induced by the agent's epistemic uncertainty about the test environment. 

That uncertainty about MDP parameters can be modelled as a POMDP is well-studied in Bayesian RL when training and testing on a single task in an online setting, primarily in the context of exploration \citep{Dearden1998BayesianQ, Duff2002OptimalLC, Strens2000ABF, Ghavamzadeh2015BayesianRL}. However, as we will discuss, this POMDP interpretation has significant consequences for the generalization problem in RL, where an agent cannot collect more data online, and must instead learn a policy from a fixed set of training contexts that generalizes to new contexts at test-time. We show that standard RL methods that do not explicitly account for this implicit partial observability can be arbitrarily sub-optimal for test-time generalization in theory and in practice. The epistemic POMDP underscores the difficulty of the generalization problem in RL, as compared to supervised learning, and provides an avenue for understanding how we should approach generalization under the sequential nature and non-uniform reward structure of the RL setting. %
Maximizing expected return in an approximation of the epistemic POMDP emerges as a principled approach to learning policies that generalize well, and we propose LEEP, an algorithm that uses an ensemble of policies to approximately learn the Bayes-optimal policy for maximizing test-time performance. %

The primary contribution of this paper is to use Bayesian RL techniques to reframe generalization in RL as the problem of solving a partially observed Markov decision process, which we call the \textit{epistemic POMDP}. The epistemic POMDP highlights the difficulty of generalizing well in RL, as compared to supervised learning. We demonstrate the practical failure modes of standard RL methods, which do not reason about this partial observability, and show that maximizing test-time performance may require algorithms to explicitly consider the agent's epistemic uncertainty during training. Our work highlights the importance of not only finding ways to help neural networks in RL generalize better, but also on learning policies that degrade gracefully when the underlying neural network eventually does fail to generalize. Empirically, we demonstrate that LEEP, which maximizes return in an approximation to the epistemic POMDP, achieves significant gains in test-time performance over standard RL methods on several ProcGen benchmark tasks.

\vspace{-1em}
\section{Related Work}
\vspace{-0.5em}
Many empirical studies have demonstrated the tendency of RL algorithms to overfit significantly to their training environments \citep{Farebrother2018GeneralizationAR, Zhang2018ASO, Justesen2018IlluminatingGI, Song2020ObservationalOI}, and the more general increased difficulty of learning policies that generalize in RL as compared to seemingly similar supervised learning problems \citep{Zhang2018NaturalEB, Zhang2018ADO, Whiteson2011ProtectingAE, Liu2020RegularizationMI}. These empirical observations have led to a newfound interest in algorithms for generalization in RL, and the development of benchmark RL
environments that focus on generalization to new contexts from a limited set of
training contexts sharing a similar structure (state and action spaces) but possibly different dynamics and rewards~\citep{Nichol2018GottaLF, Cobbe2019QuantifyingGI, Kuttler2020TheNL, Cobbe2020LeveragingPG, Stone2021TheDC}.

\textbf{Generalization in RL.} Approaches for improving generalization in RL have fallen into two main categories: improving the ability of  function approximators to
generalize better with inductive biases, and incentivizing behaviors that are easier to generalize to unseen contexts. To improve the representations learned in RL, prior work has considered imitating environment dynamics \citep{Jaderberg2017ReinforcementLW, Stooke2020DecouplingRL},  seeking bisimulation relations \citep{Zhang2020LearningIR, Agarwal2021ContrastiveBS}, and more generally,  addressing representational challenges in the RL optimization process \citep{Igl2019GeneralizationIR, Jiang2020PrioritizedLR}. In image-based domains,  inductive biases imposed via neural network design have also been proposed to improve robustness to certain factors of variation in the state~\citep{Lee2020NetworkRA,Kostrikov2020ImageAI, Raileanu2020AutomaticDA}. The challenges with generalization in RL that we will describe in this paper stem from the deficiencies of MDP objectives, and cannot be fully solved by choice of representations or functional inductive biases.
In the latter category, one approach is domain randomization, varying environment parameters such as coefficients of friction or textures, to obtain behaviors that are effective across many candidate parameter settings~\citep{Sadeghi2017CAD2RLRS, Tobin2017DomainRF, Rajeswaran2017EPOptLR, Sim2Real2018, kang2019generalization}.
Domain randomization sits within a class of methods that seek robust policies by injecting noise into the agent-environment loop, whether in the state \citep{Stulp2011LearningTG}, the action (e.g., via max-entropy RL) \citep{Cobbe2019QuantifyingGI}, or intermediary layers of a neural network policy  (e.g., through information bottlenecks) \citep{Igl2019GeneralizationIR, Lu2020DynamicsGV}. In doing so, these methods effectively introduce partial observability into the problem; while not necessarily equivalent to the partial observability that is induced by the epistemic POMDP, it may indicate why these methods generalize well empirically.

\textbf{Bayesian RL:} Our work recasts generalization in RL within the Bayesian RL framework, the problem of acting optimally %
under a belief distribution over MDPs (see Ghavamzadeh et al.~\citep{Ghavamzadeh2015BayesianRL} for a survey). Bayesian uncertainty has been studied in many sub-fields of RL \citep{Ramachandran2007BayesianIR, Lazaric2010BayesianMR, Jeon2018ABA, Zintgraf2020VariBADAV}, the most prominent being for exploration and learning efficiently in the online RL setting. Bayes-optimal behavior in RL is often reduced to acting optimally in a POMDP, or equivalently, a belief-state MDP \citep{Duff2002OptimalLC}, of which our epistemic POMDP is a specific instantiation. Learning the Bayes-optimal policy exactly is intractable in all but the simplest problems \citep{Weber1992OnTG, Poupart2006AnAS}, and many works in Bayesian RL have studied relaxations that remain asymptotically optimal for learning, for example with value of perfect information \citep{Dearden1998BayesianQ, Dearden1999ModelBB} or Thompson sampling \citep{Strens2000ABF, Osband2013MoreER, Russo2014LearningTO}.
Our main contribution is to revisit these classic ideas in the context of generalization for RL. We find that the POMDP interpretation of Bayesian RL \citep{Dearden1998BayesianQ, Duff2002OptimalLC, ross2007bayes} provides new insights on inadequacies of current algorithms used in practice, and explains why generalization in RL can be more challenging than in supervised learning. Being Bayesian in the generalization setting also requires new tools and algorithms beyond those classically studied in Bayesian RL, since test-time generalization is measured using regret over a \textit{single} evaluation episode, instead of throughout an online training process. As a result, algorithms and policies that minimize short-term regret (i.e., are more exploitative) are preferred over traditional algorithms like Thompson sampling that explore thoroughly to ensure asymptotic optimality at the cost of short-term regret.

\section{Problem Setup}
We consider the problem of learning RL policies given a set of training contexts that generalize well to new unseen contexts. This problem can be formalized in a Markov decision process (MDP) where the agent does not have full access to the MDP at training time, but only particular initial states or conditions. Before we describe what this means, we must describe the MDP~$\gM$, which is given by a tuple
$(\gS, \gA, r, T, \rho, \gamma)$, with state space $\gS$, action space $\gA$, Markovian transition function~$T(s_{t+1} |s_t, a_t)$, bounded reward function $r(s_t,a_t)$, and initial state distribution $\rho(s_0)$. A policy $\pi$ induces a discounted state distribution $d^{\pi}(s) = (1-\gamma) \mathbb{E}_{\pi}[\sum_{t \geq 0} \gamma^t 1(s_t = s)]$, and achieves return ~${J_\gM(\pi) = \E_\pi[\sum_{t \geq 0} \gamma^tr(s_t, a_t)]}$ in the MDP.
Classical results establish that a deterministic Markovian policy~$\pi^*$ maximizes this objective amongst all  history-dependent policies.

We focus on generalization in contextual MDPs where the agent is only trained on a subsample of contexts, and seeks to generalize well across unseen contexts. A contextual MDP is an MDP in which the state can be decomposed as~${s_t = (c, s_t')}$, a context vector~${c \in \gC}$ that remains constant throughout an episode, and a sub-state~${s' \in \gS'}$ that may vary:~${\gS \coloneqq \gC \times \gS'}$. Each context vector corresponds to a different situation that the agent might be in, each with slightly different dynamics and rewards, but some shared structure across which an agent can generalize.  During training, the agent is allowed to interact only within a sampled subset of contexts~${\ctrain \subset \gC}$. The generalization performance of the agent is measured by the return of the agent's policy in the full contextual MDP $J(\pi)$, corresponding to expected performance when placed in potentially new and unseen contexts. While our examples and experiments will be in contextual MDPs, our theoretical results also apply to other RL generalization settings where the full MDP cannot be inferred unambiguously from the data available during training, for example in offline reinforcement learning \citep{levine2020offline}.

\section{Warmup: A Sequential Classification RL Problem}
\label{sec:classification_as_rl}
\begin{figure}
    \centering
    \includegraphics[width=\linewidth]{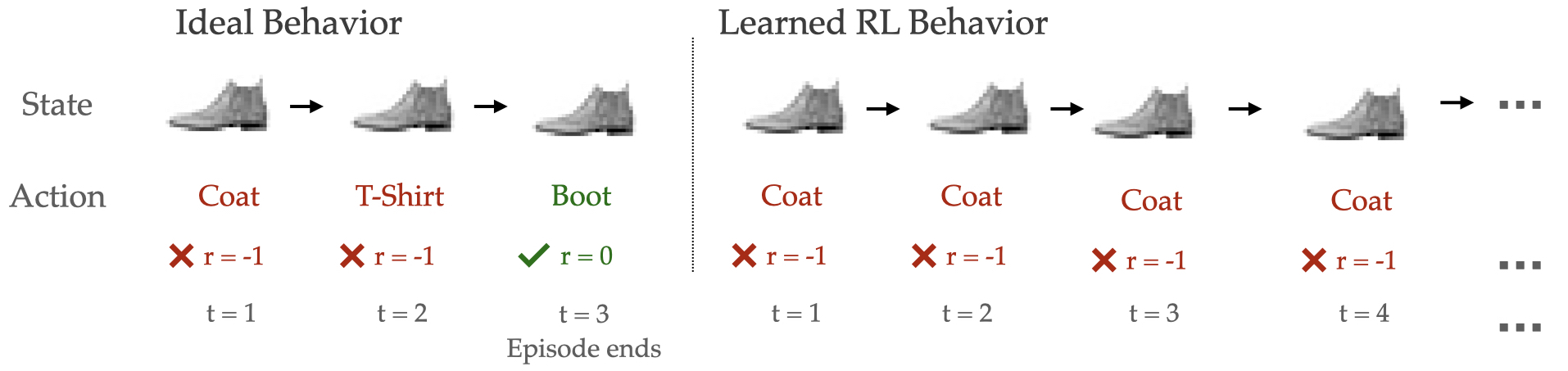}
    \vspace{-1.5em}
    \caption{\footnotesize{\textbf{ Sequential Classification RL Problem.} In this task, an agent must keep guessing the label for an image until it gets it correct. To avoid low test return, policies should change actions if the label guessed was incorrect, but  standard RL methods fail to do so, instead guessing the same incorrect label repeatedly.}
    }
    \label{fig:classification}
    \vspace{-1.5em}
\end{figure}

\begin{wrapfigure}{r}{0.4\textwidth}
    \centering
    \includegraphics[width=0.95\linewidth]{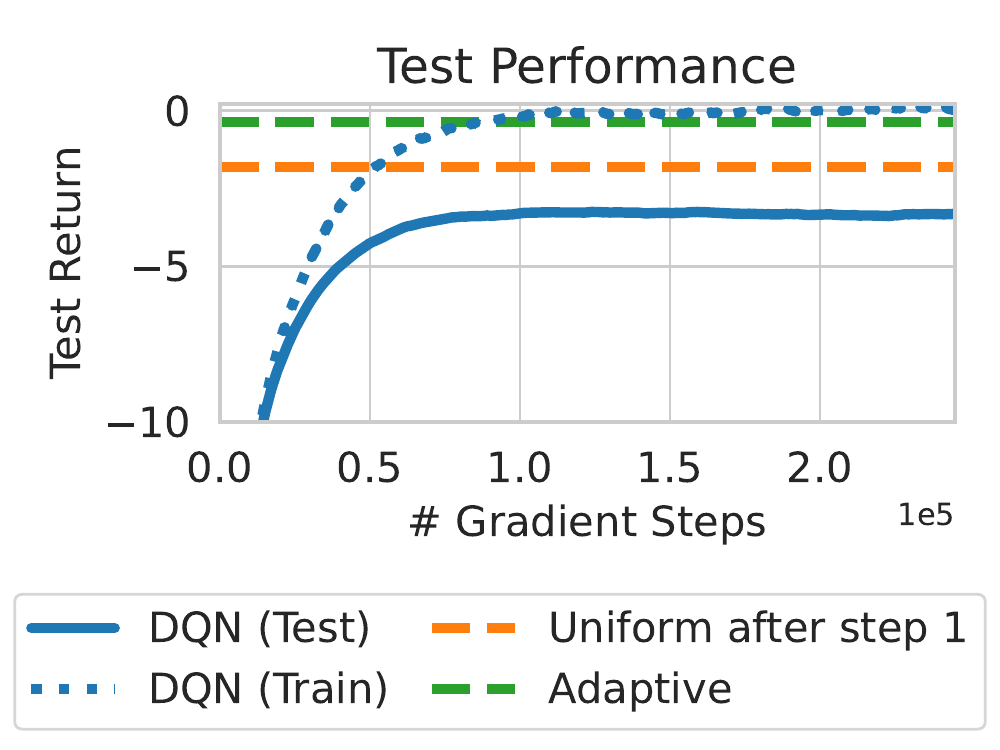}
    \includegraphics[width=0.95\linewidth]{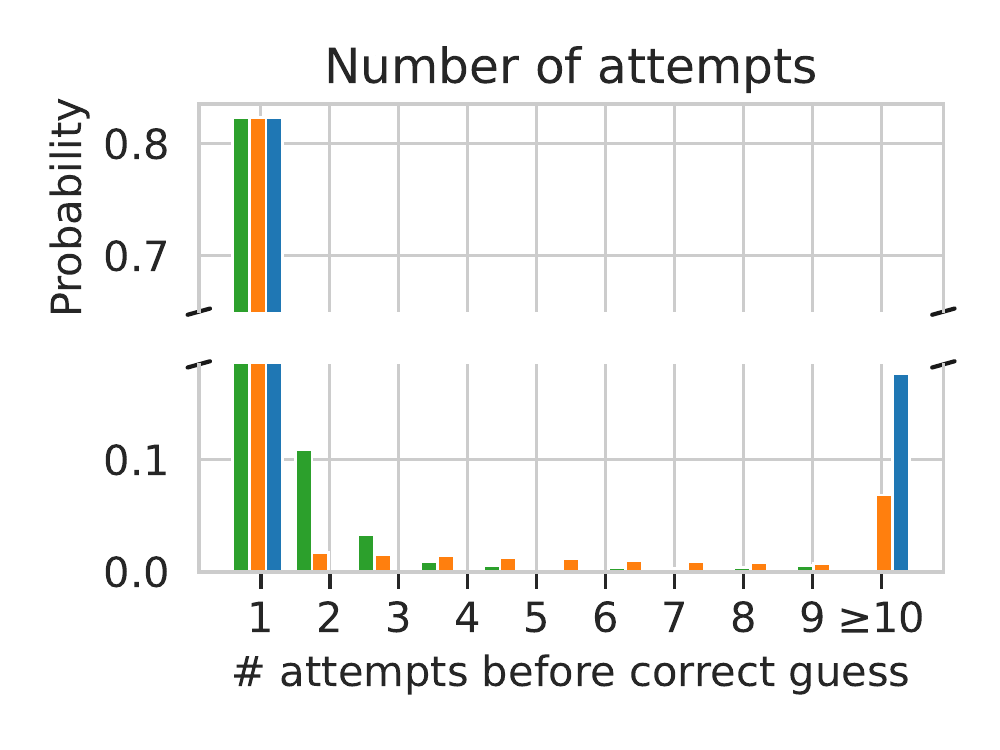}
    \caption{\footnotesize{\textbf{DQN on RL FashionMNIST.} DQN achieves lower test performance than simple variants that leverage the  structure of the RL problem.}}
\label{fig:classification_results}
\end{wrapfigure}We begin our study of generalization in RL with an example problem that is set up to be close to a supervised learning task where generalization is relatively well understood: image classification on the FashionMNIST dataset~\citep{FMNIST}. In this environment (visualized in Figure \ref{fig:classification}), an image from the dataset is sampled (the context) at the beginning of an episode and held fixed; the agent must identify the label of the image to complete the episode. If the agent guesses correctly, it receives a reward of $0$ and the episode ends; if incorrect, it receives a reward of $-1$ and the episode continues, so it must attempt another guess for the \emph{same} image at the next time step. This RL problem is near identical to supervised classification, the core distinction being that an agent may interact with the same image over several timesteps in an episode instead of only one attempt as in supervised learning. Note that since episodes may last longer than a single timestep, this problem is not a contextual bandit. %

The optimal policy in both the one-step and sequential RL version of the problem deterministically outputs the correct label for the image, because the image fully determines the label (in other words, it is a fully observed MDP). However, this optimal strategy generally cannot be learned from a finite training set, since some generalization error is unavoidable. With a fixed training set, the strategy for generalizing in classification remains the same: deterministically choose the label the agent is most confident about. However, the RL setting introduces two new factors: the agent gets multiple tries at classifying the same image, and it knows if an attempted label is incorrect. To generalize best to new test images, an RL policy must leverage this additional structure, for example by trying many possible labels, or by changing actions if the previous guess was incorrect.

A standard RL algorithm, which estimates the optimal policy on the empirical MDP defined by the dataset of training images %
does not learn to leverage these factors, and instead learns behavior highly sub-optimal for generalization. We obtained a policy by running DQN~\citep{DQN} (experimental details in Appendix~\ref{appendix:FashionMnistImplementation}), whose policy deterministically chooses the same label for the image at every timestep. Determinism is not specific to DQN, and is inevitable in any RL method that models the problem as an MDP because the optimal policy in the MDP is always deterministic and Markovian. The learned deterministic policy either guesses the correct label immediately, or guesses incorrectly and proceeds to make the same incorrect guess on every subsequent time-step. We compare performance in Figure \ref{fig:classification_results} with a version of the agent that starts to guess randomly if incorrect on the first timestep, and a different agent that acts by process of elimination: first choosing the action it is most confident about, if incorrect, then the second, and so forth. Although all three versions have the same training performance, the learned RL policy generalizes more poorly than these alternative variants that exploit the sequential nature of the problem. In Section~\ref{sec:understandingOptimality}, we will see that this process-of-elimination is, in some sense, the optimal way to generalize for this task. This experiment reveals a tension: learning policies for generalization that rely on an MDP model fail, even though the underlying environment \textit{is} an MDP. This failure holds in any MDP model with limited data, whether the empirical MDP or more sophisticated MDPs that use uncertainty estimates in their construction.

\section{Modeling Generalization in RL as an Epistemic POMDP}
To better understand test-time generalization in RL, we study the problem under a Bayesian perspective. We show that training on limited training contexts leads to an implicit partial observability at test-time that we describe using a formalism called the epistemic POMDP. %

\subsection{The Epistemic POMDP}
\label{sec:epistemic_pomdp}

In the Bayesian framework, when learning given a limited amount of evidence $\gD$ from an MDP $\gM$, we can use a prior distribution $\gP(\gM)$ to construct a posterior belief distribution $\gP(\gM | \gD)$ over the identity of the MDP. For learning in a contextual MDP, $\gD$ corresponds to the environment dynamics and reward in training contexts $\ctrain$ that the agent can interact with, and the posterior belief distribution $\gP(\gM | \gD)$ models the agent's uncertainty about the behavior of the environment in contexts that it has not seen before (e.g. uncertainty about the label for a image from the test set in the RL image classification example).

Since the agent only has partial access to the MDP $\gM$ during training, the agent does not know which MDP from the posterior distribution is the true environment, and must act at test-time under this uncertainty. Following a reduction common in Bayesian RL \citep{Duff2002OptimalLC, Ghavamzadeh2015BayesianRL}, we model this test-time uncertainty using a partially observed MDP that we will call the \textbf{epistemic POMDP}. The epistemic POMDP is structured as follows: each new episode in the POMDP begins by sampling a single MDP~${\gM \sim \gP(\gM | \gD)}$ from the posterior, and then the agent interacts with~$\gM$ until the episode ends in this MDP. The agent does not observe \textit{which} MDP was sampled, and since the MDP remains fixed for the duration of the episode, this induces implicit partial observability. 
Effectively, each episode in the epistemic POMDP corresponds to acting in one of the possible environments that is consistent with the evidence that the agent is allowed access to at training time.%

The epistemic POMDP is formally defined as the tuple~${\gM^{\po} = (\gS^\po, \gO^\po, \gA, T^\po, r^\po, \rho^\po, \gamma)}$.  A state in this POMDP~${s_t^\po = (\gM, s_t)}$ contains the identity of the current MDP being acted in~$\gM$, and the current state in this MDP~$s_t$; we write the state space as~${\gS^\po = \mathbf{M} \times \gS}$, where~$\mathbf{M}$ is the space of MDPs with support under the prior.
The agent only gets to observe~${o_t^\po = s_t}$,  the state in the MDP (${\gO^\po = \gS}$). The initial state distribution is defined by the posterior distribution over MDPs: $\rho^\po((\gM, s_0)) = \gP(\gM | \gD)\rho_\gM(s_0)$,
and the transition and reward functions in the POMDP reflect the dynamics in the current MDP:
\begin{equation}
    T^\po((\gM', s') \mid (\gM, s), a) = \delta(\gM' = \gM)T_{\gM}(s' | s,a)~~~~~~r^\po((\gM, s), a) = r_{\gM}(s,a)\,.
\end{equation}
What makes the epistemic POMDP a useful tool for understanding generalization in RL is that when the prior is well-specified, performance in the epistemic POMDP $\gM^\po$ corresponds exactly to the expected return of the agent at test-time.

\begin{restatable}{proposition}{epistemicpomdpdefn}
\label{prop:epistemic_pomdp_defn}
If the true MDP $\gM$ is sampled from $\gP(\gM)$, and evidence $\gD$ from $\gM$ is provided to an algorithm during training, then the expected test-time return of $\pi$ is equal to its performance in the epistemic POMDP $\gM^\po$.
\begin{equation}
\label{eqn:pomdp_objective}
    J_{\gM^\po}(\pi) %
    = \E_{\gM \sim \gP(\gM)}[J_{\gM}(\pi) \mid \gD].
\end{equation}
In particular, the optimal policy in $\gM^\po$ is Bayes-optimal for generalization to the unknown MDP $\gM$: it receives the highest expected test-time return amongst all possible policies.
\end{restatable}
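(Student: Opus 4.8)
The plan is to unfold the definition of the POMDP return $J_{\gM^\po}(\pi)$ and show that the trajectory distribution it induces factorizes into a draw of the MDP $\gM$ from the posterior followed by an ordinary rollout of $\pi$ in that fixed MDP. First I would write $J_{\gM^\po}(\pi) = \E_\pi[\sum_{t\geq 0}\gamma^t r^\po(s^\po_t, a_t)]$, where the expectation is over POMDP trajectories $(s^\po_0, a_0, s^\po_1, \dots)$ generated under $\pi$. The essential structural observation is that the transition kernel carries the factor $\delta(\gM' = \gM)$, so the MDP component of the state is sampled once at the start from $\rho^\po$, i.e. from the posterior $\gP(\gM \mid \gD)$, and then held constant for the entire episode. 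Consequently the law of the trajectory can be written as an outer expectation over $\gM \sim \gP(\gM \mid \gD)$ and an inner expectation over the rollout restricted to the fiber of that $\gM$.

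The key step is to argue that, conditioned on a fixed $\gM$, the inner rollout is distributed exactly as a rollout of $\pi$ in $\gM$ itself, so that the inner expectation equals $J_\gM(\pi)$. This hinges on the observation model $o^\po_t = s_t$: the agent sees precisely the sub-state of the current MDP and nothing that reveals $\gM$, so the (possibly history-dependent) policy $\pi$ produces the same action distribution given the observed history $(s_0, a_0, \ldots, s_t)$ as it would when interacting directly with $\gM$. Because $T^\po$ and $r^\po$ restrict to $T_\gM$ and $r_\gM$ on the $\gM$-fiber, the induced distributions over $(s_t, a_t)$ sequences coincide and the discounted reward sums agree term by term; taking the inner expectation therefore yields $J_\gM(\pi)$ exactly.

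Combining the two steps gives $J_{\gM^\po}(\pi) = \E_{\gM \sim \gP(\gM \mid \gD)}[J_\gM(\pi)]$, and rewriting the posterior expectation as a prior expectation conditioned on the evidence recovers the claimed identity $J_{\gM^\po}(\pi) = \E_{\gM\sim\gP(\gM)}[J_\gM(\pi)\mid \gD]$, which by construction is the expected test-time return of $\pi$ when $\gM \sim \gP(\gM)$ and the training evidence is $\gD$. The Bayes-optimality claim is then immediate: since the two objectives are equal as functions of $\pi$, any policy maximizing $J_{\gM^\po}$ over all history-dependent policies also maximizes expected test-time return, and such a maximizer exists within the POMDP policy class by standard POMDP theory.

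I expect the main obstacle to be making the factorization rigorous: carefully specifying the trajectory measure on the POMDP, invoking the $\delta(\gM'=\gM)$ structure to justify that $\gM$ is a constant of the motion, and verifying that the observation map $o^\po_t = s_t$ guarantees that $\pi$ behaves identically in the POMDP episode and in the direct MDP rollout (in particular, that no information about $\gM$ leaks through the observed history in a way that would desynchronize the two action distributions). The reward summation and discounting are routine once this bookkeeping is in place.
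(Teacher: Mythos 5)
Your proposal is correct and follows essentially the same route as the paper's proof: both unfold $J_{\gM^\po}(\pi)$ into an outer expectation over $\gM \sim \gP(\gM \mid \gD)$ (using the fact that the $\delta(\gM'=\gM)$ kernel fixes the MDP for the whole episode) and an inner expectation equal to $J_\gM(\pi)$, then identify this with the posterior-averaged test-time return. The paper simply treats this factorization as immediate from the construction of the epistemic POMDP, whereas you spell out the trajectory-measure bookkeeping; the content is the same.
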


The epistemic POMDP is based on well-understood concepts in Bayesian reinforcement learning, and Bayesian modeling more generally. However, in contrast to prior works on Bayesian RL, we are specifically concerned with settings where there is a training-test split, and performance is measured by a single test episode. While using Bayesian RL to accelerate exploration or minimize regret has been well-explored~\citep{Ghavamzadeh2015BayesianRL}, we rather use the Bayesian lens specifically to understand generalization -- a perspective that is distinct from prior work on Bayesian RL. Towards this goal, the equivalence between test-time return and expected return in the epistemic POMDP allows us to use performance in the POMDP as a proxy for understanding how well current RL methods can generalize.

\subsection{Understanding Optimality in the Epistemic POMDP}
\vspace{-0.5em}
\label{sec:understandingOptimality}

We now study the structure of the epistemic POMDP, and use it to characterize properties of Bayes-optimal test-time behavior and the sub-optimality of alternative policy learning approaches. The majority of our results follow from well-known results about POMDPs, so we present them here informally, with formal statements and proofs in Appendix \ref{appendix:Sec5}. We begin by revisiting the sequential image classification problem, and describing the induced epistemic POMDP. 

\textbf{Example }(Sequential Image Classification)\textbf{.} ~~In the task from Section \ref{sec:classification_as_rl}, the agent's uncertainty concerns  how images are mapped to labels. Each MDP $\gM$ in the posterior distribution corresponds to a different potential classifier $Y_\gM$, where $Y_\gM(x)$ denotes the label of $x$ according to $\gM$. Each episode in the epistemic POMDP requires the agent to guess the label for an image, where the label is determined by a randomly chosen classifier whose identity is unknown. Acting optimally in the epistemic POMDP requires strategies that work well in expectation over the distribution of labels that is induced by the posterior distribution $p(y|x, \gD) = \E_{\gM \sim \gP(\gM|\gD)}[1(Y_\gM(x) = y)]$. A deterministic policy (as is learned by standard RL algorithms) is a high-risk strategy in the POMDP; it receives decent return when the guessed label matches the one from the sampled classifier, but exceedingly low return if the sampled classifier outputs a different label (-1 at every timestep). The Bayes-optimal strategy for maximizing expected test-time return corresponds to a process of elimination: first choose the most likely label $a = \argmax p(y|x, \gD)$; if this is incorrect, eliminate it and choose the next-most likely, repeating until the correct label is finally chosen. Amongst memoryless policies, the optimal behavior is stochastic, sampling actions according to the distribution $\pi^*(a|x) \propto \sqrt{p(y|x, \gD)}$ (derivation in Appendix~\ref{appendix:FashionMnistTheory}).

The characteristics of the optimal policy in the epistemic POMDP for the image classification RL problem match well-established results that optimal POMDP policies are generally memory-based \citep{monahan1982state}, and amongst memoryless policies, the optimal policy may be stochastic \citep{singhPOMDP, Montfar2015GeometryAD}.
Because of the equivalence between the epistemic POMDP and test-time behavior, these maxims are also true for Bayes-optimal behavior when maximizing test-time performance.
\begin{principle}
The Bayes-optimal policy for maximizing test-time performance is in general non-Markovian. When restricted to Markovian policies, the Bayes-optimal policy is in general stochastic.
\end{principle}
The reason that Bayes-optimal behavior is often non-Markovian is that the experience collected thus far in the episode contains information about the identity of the MDP being acted in (which is hidden from the agent observation), and to maximize expected return, the agent must adapt its subsequent behavior to incorporate this new information. The fact that acting optimally at test-time formally requires adaptivity (or stochasticity for memoryless policies) highlights the difficulty of generalizing well in RL, and provides a new perspective for understanding the success various empirical studies have found in  improving generalization performance using recurrent networks \citep{openAIcube, simToRealPeng} and stochastic regularization penalties \citep{Stulp2011LearningTG, Cobbe2019QuantifyingGI, Igl2019GeneralizationIR, Lu2020DynamicsGV}.

It is useful to understand to what degree the partial observability plays a role in determining Bayes-optimal behavior. When the partial observability is insignificant, the epistemic POMDP objective can coincide with a surrogate MDP approximation, and Bayes-optimal solutions can be attained with standard fully-observed RL algorithms. For example, if there is a policy that is simultaneously optimal in \textit{every} MDP from the posterior, then an agent need not worry about the (hidden) identity of the MDP, and just follow this policy. Perhaps surprisingly, this kind of condition is difficult to relax: we show in Proposition \ref{prop:deterministic-dominated-random} that even if a policy is optimal in many (but not all) of the MDPs from the posterior, this seemingly ``optimal'' policy can generalize poorly at test-time.

Moreover, under partial observability, optimal policies for the MDPs in the posterior may differ substantially from Bayes-optimal behavior: in Proposition \ref{prop:suboptimal-actions-everywhere}, we show that the Bayes-optimal policy may take actions that are sub-optimal in \textit{every} environment in the posterior. These results indicate the brittleness of learning policies based on optimizing return in an MDP model when the agent has not yet fully resolved the uncertainty about the true MDP parameters.
\begin{principle}[Failure of MDP-Optimal Policies, Propositions \ref{prop:deterministic-dominated-random}, \ref{prop:suboptimal-actions-everywhere}]
The expected test-time return of policies that are learned by maximizing reward in any MDP from the posterior, as standard RL methods do, may be arbitrarily low compared to that of Bayes-optimal behavior. 
\end{principle}

As Bayes-optimal memoryless policies are stochastic, one may wonder if simple strategies for inducing stochasticity, such as adding $\epsilon$-greedy noise or entropy regularization, can alleviate the sub-optimality that arose with deterministic policies in the previous paragraph. 
In some cases, this may be true; one particularly interesting result is that in certain goal-reaching problems, entropy-regularized RL can be interpreted as optimizing an epistemic POMDP objective with a specific form of posterior distribution over reward functions (Proposition~\ref{appendix:MaxEntRL}) \citep{eysenbachMaxEnt}. For the more general setting, we show in Proposition \ref{prop:stochastic-dominated-random} that entropy regularization and other general-purpose techniques can similarly catastrophically fail in epistemic POMDPs.

\begin{principle}[Failure of Generic Stochasticity, Proposition \ref{prop:stochastic-dominated-random}]
The expected test-time return of policies learned with stochastic regularization techniques like maximum-entropy RL that are agnostic of the posterior $\gP(\gM|\gD)$ may be arbitrarily low compared to that of Bayes-optimal behavior.
\end{principle}

This failure happens because the degree of stochasticity used by the Bayes-optimal policy reflects the agent's epistemic uncertainty about the environment; since standard regularizations are agnostic to this uncertainty, the learned behaviors often do not reflect the appropriate level of stochasticity needed. A maze-solving agent acting Bayes-optimally, for example, may choose to act deterministically in mazes like those it has seen at training, and on others where it is less confident, rely on random exploration to exit the maze, inimitable behavior by regularization techniques agnostic to this uncertainty.

Our analysis of the epistemic POMDP highlights the difficulty of generalizing well in RL, in both the complexity of Bayes-optimal policies (Remark 5.1) and the deficiencies of our standard MDP-based RL algorithms (Remark 5.2, 5.3) . While MDP-based algorithms can serve as a useful starting point for acquiring generalizable skills, learning policies that perform well in new test-time scenarios may require more complex algorithms that attend to the epistemic POMDP structure that is implicitly induced by the by the agent's epistemic uncertainty.

\vspace{-0.5em}
\section{Learning Policies that Generalize Well Using the Epistemic POMDP}
\vspace{-0.5em}
When the epistemic POMDP $\gM^\po$ can be exactly obtained, we can learn RL policies that generalize well to the true (unknown) MDP $\gM$ by learning an optimal policy in the POMDP. In this oracle setting, any POMDP-solving method will suffice, and design choices like policy function classes (e.g. recurrent vs Markovian policies) or agent representations (e.g. belief state vs PSRs) made based on the requirements of the specific domain. However, in practice, the epistemic POMDP can be challenging to approximate due to the difficulties of learning coherent MDP models and maintaining a posterior over such MDP models in high-dimensional domains. 

In light of these challenges, we now focus on practical methods for learning generalizable policies when the exact posterior distribution (and therefore true epistemic POMDP) cannot be recovered exactly. We derive an algorithm for learning the optimal policy in the epistemic POMDP induced by an approximate posterior distribution $\hat{\gP}(\gM| \gD)$ with finite support. We use this to motivate LEEP, a simple ensemble-based algorithm for learning policies in the contextual MDP setting.

\vspace{-0.5em}
\subsection{Policy Optimization in an Empirical Epistemic POMDP}
\vspace{-0.5em}

Towards a tractable algorithm, we assume that instead of the true posterior $\gP(\gM | \gD)$, we only have access to an empirical posterior distribution $\hat{\gP}(\gM | \gD)$ defined by $n$ MDP samples from the posterior distribution $\{\gM_i\}_{i\in [n]}$. This empirical posterior distribution induces an empirical epistemic POMDP $\hat{\gM}^\po$; our ambition is to learn the optimal policy in this POMDP. Rather than directly learning this optimal policy as a generic POMDP solver might, we recognize that $\hat{\gM}^\po$ corresponds to a collection of $n$ MDPs \footnote{Note that when the true environment is a contextual MDP, the sampled MDP $\gM_i$ does not correspond to a single context within a contextual MDP --- each MDP $\gM_i$ is an \emph{entire} contextual MDP with many contexts.
} and decompose the optimization problem to mimic this structure. We will learn $n$ policies $\pi_1, \cdots, \pi_n$, each policy $\pi_i$ in one of the MDPs $\gM_i$ from the empirical posterior, and combine these policies together to recover a single policy $\pi$ for the POMDP. Reducing the POMDP policy learning problem into a set of MDP policy learning problems can allow us to leverage the many recent advances in deep RL for scalably solving MDPs. The following theorem links the expected return of a policy $\pi$ in the empirical epistemic POMDP $\hat{\gM^\po}$, in terms of the performance of the policies $\pi_i$ on their respective MDPs $\gM_i$, and provides a natural objective for learning policies in this decoupled manner.

\begin{restatable}{proposition}{theoremLowerBound}
\label{thm:lower_bound}
Let $\pi,\pi_1,\cdots \pi_n$ be memoryless policies, and define $r_{\max} = \max_{i, s, a} |r_{\gM_i}(s,a)|$.  The expected return of $\pi$ in $\hat{\gM}^\po$
is bounded below as:
\vspace{-0.5em}
\begin{equation}
\label{eqn:lower_bound}
         J_{\hat{\gM}^\po}(\pi) \geq \frac{1}{n} \sum_{i=1}^n J_{\gM_i}(\pi_i) - \frac{\sqrt{2}r_{\max}}{(1-\gamma)^2n} \sum_{i=1}^n \E_{s \sim d_{\gM_i}^{\pi_i}}\left[\sqrt{ D_{KL}\left(\pi_i(\cdot | s)\,\, || \,\, \pi(\cdot | s)\right)}\right],
\end{equation}
\vspace{-0.5em}
\end{restatable}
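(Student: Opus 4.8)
The plan is to first collapse the POMDP return into an average of ordinary single-MDP returns, and then prove the inequality one MDP at a time via a performance-difference argument. Since the empirical posterior $\hat{\gP}(\gM\mid\gD)$ places mass $1/n$ on each $\gM_i$, Proposition~\ref{prop:epistemic_pomdp_defn} applied to $\hat{\gM}^\po$ gives
\[
J_{\hat{\gM}^\po}(\pi) = \tfrac{1}{n}\sum_{i=1}^n J_{\gM_i}(\pi).
\]
Here the restriction to memoryless $\pi$ is precisely what makes each $J_{\gM_i}(\pi)$ a well-defined single-MDP return (a history-dependent policy could instead infer the hidden index $i$ over the course of an episode, breaking the clean decomposition). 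Subtracting $\tfrac{1}{n}\sum_i J_{\gM_i}(\pi_i)$ from both sides, it suffices to establish the per-MDP inequality
\[
J_{\gM_i}(\pi_i) - J_{\gM_i}(\pi) \le \frac{\sqrt{2}\,r_{\max}}{(1-\gamma)^2}\,\E_{s\sim d^{\pi_i}_{\gM_i}}\!\left[\sqrt{D_{KL}\!\left(\pi_i(\cdot\mid s)\,\|\,\pi(\cdot\mid s)\right)}\right]
\]
for each $i$, and then average over $i$.

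For the per-MDP step I would invoke the classical performance difference lemma in $\gM_i$,
\[
J_{\gM_i}(\pi_i) - J_{\gM_i}(\pi) = \frac{1}{1-\gamma}\,\E_{s\sim d^{\pi_i}_{\gM_i}}\,\E_{a\sim\pi_i(\cdot\mid s)}\!\left[A^{\pi}_{\gM_i}(s,a)\right],
\]
where $A^{\pi}_{\gM_i} = Q^{\pi}_{\gM_i} - V^{\pi}_{\gM_i}$. The key observation for obtaining the sharp constant is that $V^{\pi}_{\gM_i}(s)$ is constant in the action, so $\sum_a\big(\pi_i(a\mid s)-\pi(a\mid s)\big)V^{\pi}_{\gM_i}(s)=0$ and the advantage may be replaced by the action-value,
\[
\E_{a\sim\pi_i(\cdot\mid s)}\!\left[A^{\pi}_{\gM_i}(s,a)\right] = \sum_a\big(\pi_i(a\mid s)-\pi(a\mid s)\big)\,Q^{\pi}_{\gM_i}(s,a).
\]
This matters because $|Q^{\pi}_{\gM_i}(s,a)|\le r_{\max}/(1-\gamma)$ by the geometric bound on discounted rewards, which is a factor of two tighter than the uniform bound $|A^{\pi}_{\gM_i}|\le 2r_{\max}/(1-\gamma)$; this is exactly what turns a constant of $2\sqrt{2}$ into $\sqrt{2}$.

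I would then bound the inner sum by Hölder's inequality,
\[
\Big|\sum_a\big(\pi_i(a\mid s)-\pi(a\mid s)\big)Q^{\pi}_{\gM_i}(s,a)\Big| \le \|\pi_i(\cdot\mid s)-\pi(\cdot\mid s)\|_1\,\frac{r_{\max}}{1-\gamma},
\]
and convert the $\ell_1$ distance to KL via Pinsker's inequality, using symmetry of total variation,
\[
\|\pi_i(\cdot\mid s)-\pi(\cdot\mid s)\|_1 = 2\,D_{TV}\!\left(\pi_i(\cdot\mid s)\,\|\,\pi(\cdot\mid s)\right) \le \sqrt{2\,D_{KL}\!\left(\pi_i(\cdot\mid s)\,\|\,\pi(\cdot\mid s)\right)}.
\]
Substituting these two bounds into the performance difference identity and carrying the $1/(1-\gamma)$ prefactor yields the per-MDP inequality with constant $\sqrt{2}\,r_{\max}/(1-\gamma)^2$; averaging over $i$ and combining with the decomposition from the first step gives the claim.

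The routine ingredients here (the performance difference lemma, the geometric bound on $Q^{\pi}_{\gM_i}$, and Pinsker's inequality) are all standard, so the only genuinely delicate points are (i) justifying the decomposition $J_{\hat{\gM}^\po}(\pi)=\tfrac{1}{n}\sum_i J_{\gM_i}(\pi)$ from Proposition~\ref{prop:epistemic_pomdp_defn}, which hinges on the memoryless assumption, and (ii) the advantage-to-$Q$ replacement, which is the single observation responsible for the tight constant. I expect (ii) to be the step most easily overlooked, since the naive bound through $\|A^{\pi}\|_\infty$ loses a factor of two.
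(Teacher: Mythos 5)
Your proof is correct, and it follows essentially the same route as the paper's: the decomposition $J_{\hat{\gM}^\po}(\pi)=\tfrac{1}{n}\sum_i J_{\gM_i}(\pi)$, the performance difference lemma in each $\gM_i$, a H\"older bound on the resulting weighted sum, and Pinsker's inequality to pass to KL. The one place you genuinely diverge is your step (ii), and it is in fact a repair of the paper's argument rather than a redundancy: the paper bounds the term $\sum_a\bigl(\pi_i(a\mid s)-\pi(a\mid s)\bigr)A^{\pi}_{\gM_i}(s,a)$ by asserting $|A^{\pi}_{\gM_i}(s,a)|\le r_{\max}/(1-\gamma)$, but that bound on the advantage is false in general --- the advantage can reach $2r_{\max}/(1-\gamma)$ (e.g.\ at a state where $\pi$ self-loops with reward $-r_{\max}$ while another action leads to reward $+r_{\max}$ forever), so the paper's stated justification would only yield the constant $2\sqrt{2}\,r_{\max}/(1-\gamma)^2$. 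Your observation that the weights $\pi_i(\cdot\mid s)-\pi(\cdot\mid s)$ sum to zero, so the advantage may be replaced by $Q^{\pi}_{\gM_i}$, which genuinely satisfies $\|Q^{\pi}_{\gM_i}\|_\infty\le r_{\max}/(1-\gamma)$, is exactly the missing step that makes the constant $\sqrt{2}$ legitimate; you correctly flag this as the delicate point. One small quibble: your parenthetical claim that history-dependence would break the decomposition $J_{\hat{\gM}^\po}(\pi)=\tfrac{1}{n}\sum_i J_{\gM_i}(\pi)$ is not quite right --- that identity holds for history-dependent policies as well, since conditioned on the sampled $\gM_i$ the trajectory distribution in the POMDP coincides with that in $\gM_i$; memorylessness is needed only because the proposition and the performance-difference machinery are stated for memoryless policies. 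This does not affect the validity of your proof.
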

\vspace{-0.5em}

This proposition indicates that if the policies in the collection $\{\pi_i\}_{i \in [n]}$ all achieve high return in their respective MDPs (first term) and are imitable by a single policy $\pi$ (second term), then $\pi$ is guaranteed to achieve high return in the epistemic POMDP. In contrast, if the policies cannot be closely imitated by a single policy, this collection of policies may not be useful for learning in the epistemic POMDP using the lower bound. This means that it may not sufficient to naively optimize each policy $\pi_i$ on its MDP $\gM_i$ without any consideration to the other policies or MDPs, since the learned policies are likely to be different and difficult to jointly imitate. To be useful for the lower bound, each policy $\pi_i$ should balance between maximizing performance on its MDP and minimizing its deviation from the other policies in the set. The following proposition shows that if the policies are trained jointly to ensure this balance, it in fact recovers the optimal policy in the empirical epistemic POMDP.

\begin{restatable}{proposition}{propositionLinkFunction}
\label{prop:link-version}
Let $f: \{\pi_i\}_{i\in[n]} \mapsto \pi$ be a function that maps $n$ policies to a single policy satisfying $f(\pi,\cdots,\pi) = \pi $ for every policy $\pi$, and let $\alpha$ be a hyperparameter satisfying $\alpha \geq \frac{\sqrt{2} r_{\text{max}}}{(1-\gamma)^2n}$. Then letting $\pi_1^*, \dots \pi_n^*$ be the optimal solution to the following optimization problem:
\begin{equation}
\label{eq:algo_objective}
    \{\pi^{*}_i\}_{i\in[n]} = \argmax_{\pi_1, \cdots, \pi_n} \frac{1}{n} \sum_{i=1}^n J_{\gM_i}(\pi_i)  - \alpha \sum_{i=1}^n \E_{s \sim d_{\gM_i}^{\pi_i}}\left[\sqrt{ D_{KL}\left(\pi_i(\cdot | s)\,\, || \,\, f(\{\pi_i\})(\cdot | s)\right)}\right],
\end{equation}
the policy $\pi^{*} \coloneqq f(\{\pi_i^*\}_{i\in[n]})$ is optimal for the empirical epistemic POMDP $\hat{\gM}^\po$.
\end{restatable}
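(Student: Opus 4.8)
The plan is a sandwich argument built from two facts. Write $\Ls(\pi_1,\dots,\pi_n)$ for the joint objective maximized in \eqref{eq:algo_objective}. The first fact is the lower bound of Proposition~\ref{thm:lower_bound}; the second is the specialization of Proposition~\ref{prop:epistemic_pomdp_defn} to the empirical posterior $\hat{\gP}(\gM\mid\gD)$, which places mass $1/n$ on each sampled $\gM_i$, so that for any memoryless policy $\pi$ we have $J_{\hat{\gM}^\po}(\pi)=\frac1n\sum_{i=1}^n J_{\gM_i}(\pi)$. The goal is to show that $\pi^*=f(\{\pi_i^*\})$ attains the optimal value of $J_{\hat{\gM}^\po}$ over memoryless policies, and I will do this by bounding $J_{\hat{\gM}^\po}(\pi^*)$ both above and below by the optimal value $\Ls(\pi_1^*,\dots,\pi_n^*)$ of the training objective.

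For the direction $\Ls(\pi_1^*,\dots,\pi_n^*)\ge \sup_\pi J_{\hat{\gM}^\po}(\pi)$, I would evaluate $\Ls$ on constant tuples. Fix any memoryless $\pi$ and feed $(\pi,\dots,\pi)$ into the objective. By the idempotence hypothesis $f(\pi,\dots,\pi)=\pi$, every penalty term becomes $D_{KL}(\pi(\cdot\mid s)\,\|\,\pi(\cdot\mid s))=0$, so $\Ls(\pi,\dots,\pi)=\frac1n\sum_i J_{\gM_i}(\pi)=J_{\hat{\gM}^\po}(\pi)$ using the second fact above. Since the constant tuple is feasible, $\Ls(\pi_1^*,\dots,\pi_n^*)=\max_{\pi_1,\dots,\pi_n}\Ls\ge \Ls(\pi,\dots,\pi)=J_{\hat{\gM}^\po}(\pi)$, and taking the supremum over memoryless $\pi$ gives the claimed inequality.

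For the reverse direction $J_{\hat{\gM}^\po}(\pi^*)\ge \Ls(\pi_1^*,\dots,\pi_n^*)$, I would apply Proposition~\ref{thm:lower_bound} with the aggregating policy $f(\{\pi_i\})$ playing the role of the single policy. This gives, for every tuple, $J_{\hat{\gM}^\po}(f(\{\pi_i\}))\ge \frac1n\sum_i J_{\gM_i}(\pi_i)-\frac{\sqrt2\,r_{\max}}{(1-\gamma)^2 n}\sum_i \E_{s\sim d_{\gM_i}^{\pi_i}}[\sqrt{D_{KL}(\pi_i(\cdot\mid s)\,\|\,f(\{\pi_i\})(\cdot\mid s))}]$. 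Because each KL term is nonnegative and $\alpha\ge \frac{\sqrt2\,r_{\max}}{(1-\gamma)^2 n}$, enlarging the penalty coefficient from $\frac{\sqrt2\,r_{\max}}{(1-\gamma)^2 n}$ to $\alpha$ only decreases the right-hand side, so $J_{\hat{\gM}^\po}(f(\{\pi_i\}))\ge \Ls(\pi_1,\dots,\pi_n)$ for all tuples; evaluating at the optimizer yields $J_{\hat{\gM}^\po}(\pi^*)\ge \Ls(\pi_1^*,\dots,\pi_n^*)$. Chaining the two directions gives $J_{\hat{\gM}^\po}(\pi^*)\ge \Ls(\pi_1^*,\dots,\pi_n^*)\ge \sup_\pi J_{\hat{\gM}^\po}(\pi)$; since $\pi^*$ is memoryless the reverse inequality is trivial, so all terms coincide and $\pi^*$ is optimal.

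The obstacles here are bookkeeping rather than conceptual: the orientation of the $\alpha$ inequality (choosing $\alpha$ large enough that the penalized training objective \emph{under}-estimates the true POMDP return, which is exactly what makes the lower-bound step valid), and the observation that idempotence of $f$ is precisely what collapses the penalty to zero on the diagonal and makes the upper-bound step tight. I would also flag the scope of the conclusion: the argument certifies optimality only within the memoryless class, since Proposition~\ref{thm:lower_bound} and the construction of $\pi^*$ are restricted to memoryless policies, and one cannot hope for optimality against history-dependent policies because Bayes-optimal POMDP behavior is in general non-Markovian.
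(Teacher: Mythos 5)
Your proof is correct and follows essentially the same route as the paper's: chaining the lower bound of Proposition~\ref{thm:lower_bound} (valid with the coefficient enlarged to $\alpha$ since the KL terms are nonnegative), the idempotence of $f$ collapsing the penalty on constant tuples, and the optimality of $\{\pi_i^*\}$ in the training objective against the constant tuple built from an optimal POMDP policy. Your closing caveat about the memoryless scope is a fair refinement the paper leaves implicit, since its comparison policy $\pi'^*$ must likewise be taken within the memoryless class for the constant-tuple step to be admissible.
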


\subsection{A Practical Algorithm for Contextual MDPs: LEEP}

Proposition \ref{prop:link-version} provides a foundation for a practical algorithm for learning policies when provided training contexts $\ctrain$ from an unknown contextual MDP. In order to use the proposition in a practical algorithm, we must discuss two problems: how posterior samples $\gM_i \sim \gP(\gM | \gD)$ can be approximated, and how the function $f$ that combines policies should be chosen.

\textbf{Approximating the posterior distribution: } Rather than directly maintaining a posterior over transition dynamics and reward models, which is especially difficult with image-based observations, we can approximate samples from the posterior via a bootstrap sampling technique \citep{bootstrappedDQN}. To sample a candidate MDP $\gM_i$, we sample with replacement from
the training contexts $\ctrain$ to get a new set of contexts $\ctrain^i$, and define $\gM_i$ to be the empirical MDP on this subset of training contexts. Rolling out trials from the posterior sample $\gM_i$ then corresponds to selecting a context at random from $\ctrain^i$, and then rolling out that context. Crucially, note that $\gM_i$ still corresponds to a \emph{distribution} over contexts, not a single context, since our goal is to sample from the posterior entire contextual MDPs.
\begin{algorithm}[t] %
  \caption{Linked Ensembles for the Epistemic POMDP (LEEP)}\label{algobox}
  \begin{algorithmic}[1]
    \State Receive training contexts $\ctrain$, number of ensemble members $n$
    \State Bootstrap sample training contexts to create $\ctrain^1, \dots \ctrain^n$, where $\ctrain^i \subset \ctrain$.
\State Initialize $n$ policies: $\pi_1, \dots \pi_n$
    \For{iteration $k = 1, 2, 3, \dots $}
        \For{policy $i = 1, \dots,  n$}
      \State Collect environment samples in training contexts $\ctrain^i$ using policy $\pi_i$
      \State Take gradient steps wrt $\pi_i$ on these samples with augmented RL loss:
      $$\pi_i \gets \pi_i - \eta \nabla_i (\gL^{RL}(\pi_i) + \alpha \E_{s \sim \pi_i, \ctrain^i}[ D_{KL}(\pi_i(a|s) \| \max_j \pi_j(a|s))])$$
        \EndFor
    \EndFor
    \State Return $\pi = \max_i \pi$: $\pi(a|s) = \frac{\max_i \pi_i(a|s)}{\sum_{a'} \max_i \pi_i(a'|s)}$.
  \end{algorithmic}
\end{algorithm}

\textbf{Choosing a link function:}
The link function $f$ in Proposition \ref{prop:link-version} that combines the set of policies together effectively serves as an inductive bias: since we are optimizing in an approximation to the true epistemic POMDP and policy optimization is not exact in practice, different choices can yield combined policies with different characteristics. Since optimal behavior in the epistemic POMDP must consider all actions, even those that are potentially sub-optimal in all MDPs in the posterior (as discussed in Section~\ref{sec:understandingOptimality}), we use an ``optimistic'' link function that does not dismiss any action that is considered by at least one of the policies, specifically ${f(\{\pi_i\}_{i\in[n]}) = (\max_i \pi_i)(a|s) \coloneqq  \frac{\max \pi_i(a|s)}{\sum_{a'} \max \pi_i(a'|s)}}$. %

\textbf{Algorithm:} We learn a set of $n$ policies $\{\pi_{i}\}_{i \in [n]}$, using a policy gradient algorithm to implement the update step. To update the parameters for $\pi_{i}$, we take gradient steps via the surrogate loss used for the policy gradient, augmented by a disagreement penalty between the policy and the combined policy $f(\{\pi_{i}\}_{i \in [n]})$ with a penalty parameter $\alpha > 0$, as in Equation \ref{eq:ppo_surrogate}:
\begin{equation}
\label{eq:ppo_surrogate}
    \gL(\pi_i) = \gL^{RL}(\pi_i) + \alpha \E_{s \sim \pi_i, \gM_i}[ D_{KL}(\pi_i(a|s) \| \max_j \pi_j(a|s))].
\end{equation}
Combining these elements together leads to our method, LEEP, which we summarize in Algorithm \ref{algobox}. In our implementation, we use PPO for $\gL^{RL}(\pi_i)$ \citep{Schulman2017ProximalPO}. In summary, LEEP bootstrap samples the training contexts to create overlapping sets of training contexts $\ctrain^1, \dots \ctrain^n$. Every iteration, each policy $\pi_i$ generates rollouts in training contexts chosen uniformly from its corresponding $\ctrain^i$, and is then updated according to Equation \ref{eq:ppo_surrogate}, which both maximizes the expected reward and minimizes the disagreement penalty between each $\pi_i$ and the combined policy $\pi = \max_j \pi_j$.

While this algorithm is structurally similar to algorithms for multi-task learning that train a separate policy for each context or group of contexts with a disagreement penalty~\citep{Distral, ghosh2018divideandconquer}, the motivation and the interpretation of these approaches are completely distinct. In multi-task learning, the goal is to solve a given set of tasks, and these methods promote transfer via a loss that encourages the solutions to the tasks to be in agreement. In our setting, while we also receive a set of tasks (contexts), the goal is not to maximize performance on the training tasks, but rather to learn a policy that maximizes performance on unseen test tasks. The method also has a subtle but important distinction: each of our policies $\pi_i$ acts on a sample from the contextual MDP posterior, \emph{not} a single training context~\citep{Distral} or element from a disjoint partitioning~\citep{ghosh2018divideandconquer}. This distinction is crucial, since our aim is not to make it easier to solve the training contexts, but the opposite: prevent the algorithm from overfitting to the individual contexts. Correspondingly, our experiments confirm that such multi-task learning approaches do not provide the same generalization benefits as our approach.

\vspace{-0.5em}
\section{Experiments}
\vspace{-0.5em}
The primary ambition of our empirical study is to test the hypothesis that policies that are learned through (approximations of) the epistemic POMDP do in fact attain better test-time performance than those learned by standard RL algorithms. We do so on the Procgen benchmark~\citep{Cobbe2020LeveragingPG}, a challenging suite of diverse tasks with image-based observations testing generalization to unseen contexts. 
\vspace{3em}
\begin{enumerate}[topsep=0mm,itemsep=0.1mm]
    \item Does LEEP derived from the epistemic POMDP lead to improved test-time performance over standard RL methods?
    \item Can LEEP prevent overfitting when provided a limited number of training contexts?
    \item How do different algorithmic components of LEEP affect test-time performance ?
\end{enumerate}

The Procgen benchmark is a set of procedurally generated games, each with different generalization challenges. In each game, during training, the algorithm can interact with 200 training levels, before it is asked to generalize to the full distribution of levels. The agent receives a $64 \times 64 \times 3$ image observation, and must output one of 15 possible actions. We instantiate our method using an ensemble of $n=4$ policies, a penalty parameter of $\alpha=1$, and PPO \citep{Schulman2017ProximalPO} to train the individual policies (full implementation details in Appendix~\ref{appendix:LEEP}). 

\begin{figure}
    \centering
\includegraphics[width=\linewidth]{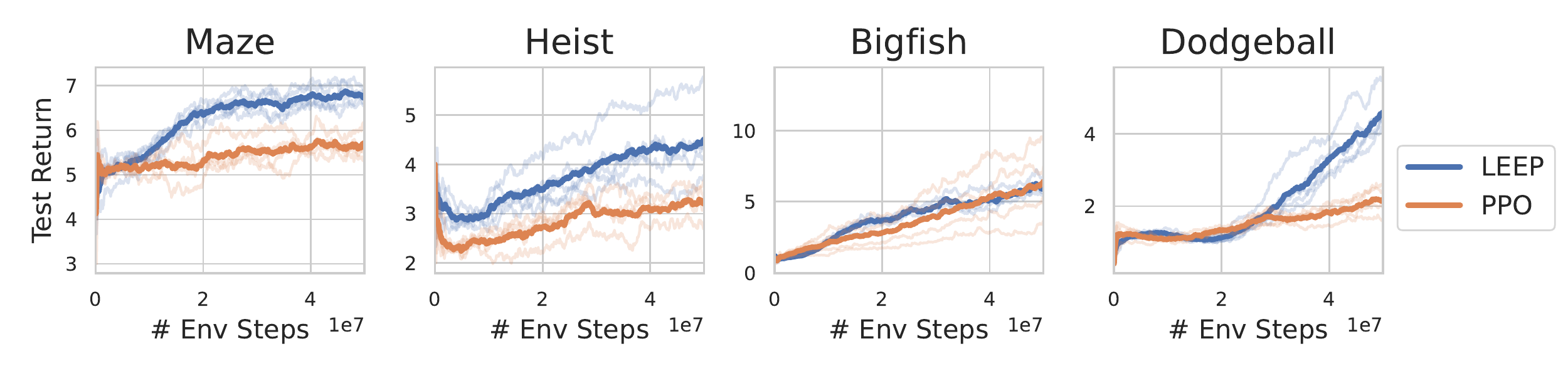}
\vspace{-0.5cm}
    \caption{Test set return for LEEP and PPO throughout training in four Procgen environments (averaged across 5 random seeds). LEEP achieves higher test returns than PPO on three tasks (Maze, Heist and Dodgeball) and matches test return on Bigfish while having less variance across seeds.   }
\vspace{-0.5cm}
    \label{fig:ppo-comparaison-2}
\end{figure}

We evaluate our method on four games in which prior work has found a large gap between training and test performance, and which we therefore conclude pose a significant generalization challenge~\citep{Cobbe2020LeveragingPG, Jiang2020PrioritizedLR, Raileanu2020AutomaticDA}:
Maze, Heist, BigFish, and Dodgeball. In Figure~\ref{fig:ppo-comparaison-2}, we compare the test-time performance of the policies learned using our method to those learned by a PPO agent with entropy regularization. In three of these environments (Maze, Heist, and Dodgeball), our method outperforms PPO by a significant margin, and in all cases, we find that the generalization gap between training and test performance is lower for our method than PPO (Appendix~\ref{appendix:morePlots}). %
\begin{wrapfigure}{r}{0.38\textwidth}
\vspace{-1em}
    \centering
    \includegraphics[width=\linewidth]{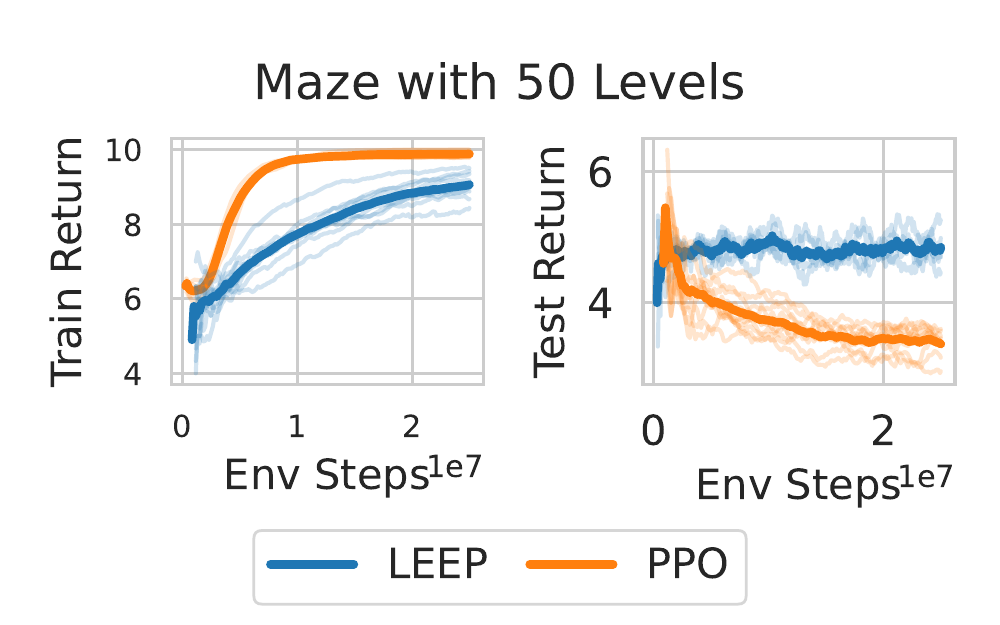}
    \includegraphics[width=\linewidth]{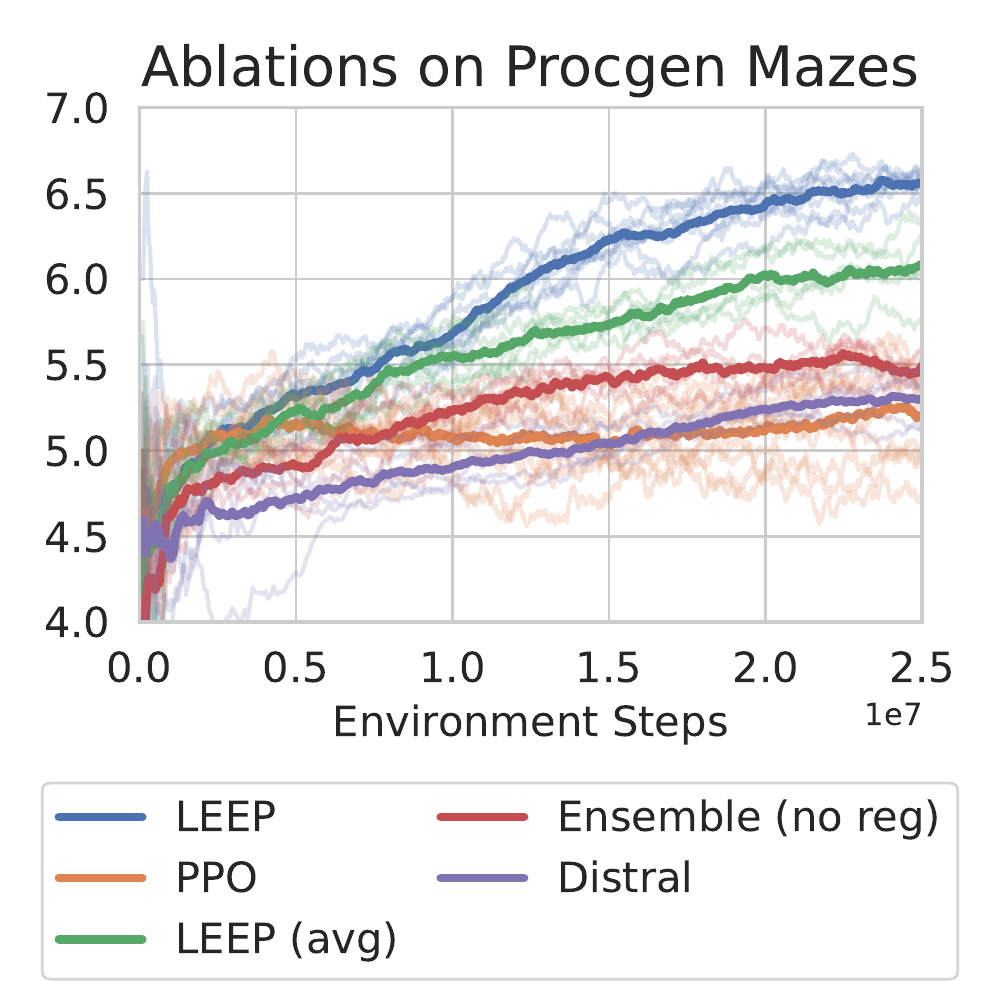}
 \vspace{-1em}   
    \caption{\footnotesize{\textbf{(top)} Performance of LEEP and PPO with only 50 training levels on Maze. \textbf{(bottom)}  Ablations of LEEP in Maze.}}
    \label{fig:ablations}
\end{wrapfigure}To understand how LEEP behaves with fewer training contexts, we ran on the Maze task with only 50 levels (Figure~\ref{fig:ablations} (top)); the test return of the PPO policy decreases through training, leading to final performance worse than the starting random policy, but our method avoids this degradation. %

We perform an ablation study on the Maze and Heist environments (Maze in Figure 4, Heist in Appendix~\ref{appendix:morePlots}) to rule out potential confounding causes for the improved generalization that our method displays on the Procgen benchmark tasks.
First, to see if the performance benefit derives solely from the use of ensembles, we compare LEEP to a Bayesian model averaging strategy that trains an ensemble of policies without regularization (``Ensemble (no reg)''), and uses a mixture of these policies. This strategy does improve performance over the PPO policy, but does not match LEEP, indicating the usefulness of the regularization. Second, we compared to a version of LEEP that combines the ensemble policies together using the average $\frac{1}{n}\sum_{i=1}^n \pi_i(a|s)$ (``LEEP (avg)'').
This link function achieves worse test-time performance than the optimistic version, which indicates that the inductive bias conferred by the $\max_i \pi_i$ link function is a useful component of the algorithm. Finally, we compare to Distral, a multi-task learning method with different motivations but similar structure to LEEP: this method helps accelerate learning on the provided training contexts (figures in Appendix~\ref{appendix:morePlots}), but does not improve generalization performance as LEEP does.

\vspace{-0.5em}
\section{Discussion}
\vspace{-0.5em}
It has often been observed experimentally that generalization in RL poses a significant challenge, but it has so far remained an open question as to whether the RL setting itself presents additional generalization challenges beyond those seen in supervised learning. In this paper, we answer this question in the affirmative, and show that, in contrast to supervised learning, generalization in RL results in a new type of problem that cannot be solved with standard MDP solution methods, due to partial observability induced by epistemic uncertainty. We call the resulting partially observed setting the epistemic POMDP, where uncertainty about the true underlying MDP results in a challenging partially observed problem. We present a practical approximate method that optimizes a bound for performance in an approximation of the epistemic POMDP, and show empirically that this approach, which we call LEEP, attains significant improvements in generalization over other RL methods that do not properly incorporate the agent's epistemic uncertainty into policy optimization. A limitation of this approach is that it optimizes a crude approximation to the epistemic POMDP with a small number of posterior samples, and may be challenging to scale to better approximations to the true objective. Developing algorithms that better model the epistemic POMDP and optimize policies within is an exciting avenue for future work, and we hope that this direction will lead to further improvements in generalization in RL.

\section*{Acknowledgements} This research was supported by an NSF graduate fellowship, the DARPA assured autonomy program, the NSF IIS-2007278 grant, a Princeton SEAS Innovation Grant and compute support from Google and Microsoft. We thank Benjamin Eysenbach, Xinyang Geng, and Justin Fu  as well as members of the Princeton Laboratory for Intelligent Probabilistic Systems for helpful discussions and feedback.

\bibliography{main}{}

\begin{thebibliography}{10}

\bibitem{Farebrother2018GeneralizationAR}
Jesse Farebrother, Marlos~C. Machado, and Michael~H. Bowling.
\newblock Generalization and regularization in dqn.
\newblock {\em ArXiv}, abs/1810.00123, 2018.

\bibitem{Zhang2018ASO}
C.~Zhang, Oriol Vinyals, R.~Munos, and S.~Bengio.
\newblock A study on overfitting in deep reinforcement learning.
\newblock {\em ArXiv}, abs/1804.06893, 2018.

\bibitem{Justesen2018IlluminatingGI}
Niels Justesen, R.~Torrado, Philip Bontrager, Ahmed Khalifa, J.~Togelius, and
  S.~Risi.
\newblock Illuminating generalization in deep reinforcement learning through
  procedural level generation.
\newblock {\em arXiv: Learning}, 2018.

\bibitem{Song2020ObservationalOI}
Xingyou Song, Yiding Jiang, Stephen Tu, Yilun Du, and Behnam Neyshabur.
\newblock Observational overfitting in reinforcement learning.
\newblock {\em ArXiv}, abs/1912.02975, 2020.

\bibitem{Dearden1998BayesianQ}
R.~Dearden, N.~Friedman, and Stuart~J. Russell.
\newblock Bayesian q-learning.
\newblock In {\em AAAI/IAAI}, 1998.

\bibitem{Duff2002OptimalLC}
M.~Duff and A.~Barto.
\newblock Optimal learning: computational procedures for bayes-adaptive markov
  decision processes.
\newblock 2002.

\bibitem{Strens2000ABF}
M.~Strens.
\newblock A bayesian framework for reinforcement learning.
\newblock In {\em ICML}, 2000.

\bibitem{Ghavamzadeh2015BayesianRL}
M.~Ghavamzadeh, Shie Mannor, Joelle Pineau, and Aviv Tamar.
\newblock Bayesian reinforcement learning: A survey.
\newblock {\em Found. Trends Mach. Learn.}, 8:359--483, 2015.

\bibitem{Zhang2018NaturalEB}
A.~Zhang, Yuxin Wu, and Joelle Pineau.
\newblock Natural environment benchmarks for reinforcement learning.
\newblock {\em ArXiv}, abs/1811.06032, 2018.

\bibitem{Zhang2018ADO}
A.~Zhang, Nicolas Ballas, and Joelle Pineau.
\newblock A dissection of overfitting and generalization in continuous
  reinforcement learning.
\newblock {\em ArXiv}, abs/1806.07937, 2018.

\bibitem{Whiteson2011ProtectingAE}
S.~Whiteson, B.~Tanner, Matthew~E. Taylor, and P.~Stone.
\newblock Protecting against evaluation overfitting in empirical reinforcement
  learning.
\newblock {\em 2011 IEEE Symposium on Adaptive Dynamic Programming and
  Reinforcement Learning (ADPRL)}, pages 120--127, 2011.

\bibitem{Liu2020RegularizationMI}
Zhuang Liu, Xuanlin Li, Bingyi Kang, and Trevor Darrell.
\newblock Regularization matters in policy optimization -- an empirical study
  on continuous control.
\newblock {\em arXiv: Learning}, 2020.

\bibitem{Nichol2018GottaLF}
Alex Nichol, Vicki Pfau, Christopher Hesse, Oleg Klimov, and John Schulman.
\newblock Gotta learn fast: A new benchmark for generalization in rl.
\newblock {\em ArXiv}, abs/1804.03720, 2018.

\bibitem{Cobbe2019QuantifyingGI}
K.~Cobbe, Oleg Klimov, Christopher Hesse, Taehoon Kim, and John Schulman.
\newblock Quantifying generalization in reinforcement learning.
\newblock In {\em ICML}, 2019.

\bibitem{Kuttler2020TheNL}
Heinrich Kuttler, Nantas Nardelli, Alexander~H. Miller, Roberta Raileanu, Marco
  Selvatici, Edward Grefenstette, and Tim Rockt{\"a}schel.
\newblock The nethack learning environment.
\newblock {\em ArXiv}, abs/2006.13760, 2020.

\bibitem{Cobbe2020LeveragingPG}
K.~Cobbe, Christopher Hesse, Jacob Hilton, and John Schulman.
\newblock Leveraging procedural generation to benchmark reinforcement learning.
\newblock {\em ArXiv}, abs/1912.01588, 2020.

\bibitem{Stone2021TheDC}
Austin Stone, Oscar Ramirez, K.~Konolige, and Rico Jonschkowski.
\newblock The distracting control suite - a challenging benchmark for
  reinforcement learning from pixels.
\newblock {\em ArXiv}, abs/2101.02722, 2021.

\bibitem{Jaderberg2017ReinforcementLW}
Max Jaderberg, V.~Mnih, Wojciech Czarnecki, Tom Schaul, Joel~Z. Leibo,
  D.~Silver, and K.~Kavukcuoglu.
\newblock Reinforcement learning with unsupervised auxiliary tasks.
\newblock {\em ArXiv}, abs/1611.05397, 2017.

\bibitem{Stooke2020DecouplingRL}
Adam Stooke, Kimin Lee, P.~Abbeel, and M.~Laskin.
\newblock Decoupling representation learning from reinforcement learning.
\newblock {\em ArXiv}, abs/2009.08319, 2020.

\bibitem{Zhang2020LearningIR}
A.~Zhang, Rowan McAllister, R.~Calandra, Y.~Gal, and Sergey Levine.
\newblock Learning invariant representations for reinforcement learning without
  reconstruction.
\newblock {\em ArXiv}, abs/2006.10742, 2020.

\bibitem{Agarwal2021ContrastiveBS}
Rishabh Agarwal, Marlos~C. Machado, P.~S. Castro, and Marc~G. Bellemare.
\newblock Contrastive behavioral similarity embeddings for generalization in
  reinforcement learning.
\newblock {\em ArXiv}, abs/2101.05265, 2021.

\bibitem{Igl2019GeneralizationIR}
Maximilian Igl, K.~Ciosek, Yingzhen Li, Sebastian Tschiatschek, C.~Zhang, Sam
  Devlin, and Katja Hofmann.
\newblock Generalization in reinforcement learning with selective noise
  injection and information bottleneck.
\newblock In {\em NeurIPS}, 2019.

\bibitem{Jiang2020PrioritizedLR}
Minqi Jiang, Edward Grefenstette, and Tim Rockt{\"a}schel.
\newblock Prioritized level replay.
\newblock {\em ArXiv}, abs/2010.03934, 2020.

\bibitem{Lee2020NetworkRA}
Kimin Lee, Kibok Lee, Jinwoo Shin, and H.~Lee.
\newblock Network randomization: A simple technique for generalization in deep
  reinforcement learning.
\newblock In {\em ICLR}, 2020.

\bibitem{Kostrikov2020ImageAI}
Ilya Kostrikov, Denis Yarats, and R.~Fergus.
\newblock Image augmentation is all you need: Regularizing deep reinforcement
  learning from pixels.
\newblock {\em ArXiv}, abs/2004.13649, 2020.

\bibitem{Raileanu2020AutomaticDA}
Roberta Raileanu, Maxwell Goldstein, Denis Yarats, Ilya Kostrikov, and
  R.~Fergus.
\newblock Automatic data augmentation for generalization in deep reinforcement
  learning.
\newblock {\em ArXiv}, abs/2006.12862, 2020.

\bibitem{Sadeghi2017CAD2RLRS}
Fereshteh Sadeghi and Sergey Levine.
\newblock (cad)2rl: Real single-image flight without a single real image.
\newblock {\em ArXiv}, abs/1611.04201, 2017.

\bibitem{Tobin2017DomainRF}
Joshua Tobin, Rachel Fong, Alex Ray, J.~Schneider, W.~Zaremba, and P.~Abbeel.
\newblock Domain randomization for transferring deep neural networks from
  simulation to the real world.
\newblock {\em 2017 IEEE/RSJ International Conference on Intelligent Robots and
  Systems (IROS)}, pages 23--30, 2017.

\bibitem{Rajeswaran2017EPOptLR}
A.~Rajeswaran, Sarvjeet Ghotra, Sergey Levine, and Balaraman Ravindran.
\newblock Epopt: Learning robust neural network policies using model ensembles.
\newblock {\em ArXiv}, abs/1610.01283, 2017.

\bibitem{Sim2Real2018}
Xue~Bin Peng, Marcin Andrychowicz, Wojciech Zaremba, and Pieter Abbeel.
\newblock Sim-to-real transfer of robotic control with dynamics randomization.
\newblock In {\em 2018 IEEE International Conference on Robotics and Automation
  (ICRA)}, pages 1--8, May 2018.

\bibitem{kang2019generalization}
Katie Kang, Suneel Belkhale, Gregory Kahn, Pieter Abbeel, and Sergey Levine.
\newblock Generalization through simulation: Integrating simulated and real
  data into deep reinforcement learning for vision-based autonomous flight.
\newblock In {\em 2019 international conference on robotics and automation
  (ICRA)}, pages 6008--6014. IEEE, 2019.

\bibitem{Stulp2011LearningTG}
F.~Stulp, E.~Theodorou, J.~Buchli, and S.~Schaal.
\newblock Learning to grasp under uncertainty.
\newblock {\em 2011 IEEE International Conference on Robotics and Automation},
  pages 5703--5708, 2011.

\bibitem{Lu2020DynamicsGV}
X.~Lu, Kimin Lee, P.~Abbeel, and Stas Tiomkin.
\newblock Dynamics generalization via information bottleneck in deep
  reinforcement learning.
\newblock {\em ArXiv}, abs/2008.00614, 2020.

\bibitem{Ramachandran2007BayesianIR}
Deepak Ramachandran and E.~Amir.
\newblock Bayesian inverse reinforcement learning.
\newblock In {\em IJCAI}, 2007.

\bibitem{Lazaric2010BayesianMR}
A.~Lazaric and M.~Ghavamzadeh.
\newblock Bayesian multi-task reinforcement learning.
\newblock In {\em ICML}, 2010.

\bibitem{Jeon2018ABA}
Wonseok Jeon, Seokin Seo, and Kee-Eung Kim.
\newblock A bayesian approach to generative adversarial imitation learning.
\newblock In {\em NeurIPS}, 2018.

\bibitem{Zintgraf2020VariBADAV}
Luisa~M. Zintgraf, K.~Shiarlis, Maximilian Igl, Sebastian Schulze, Y.~Gal,
  Katja Hofmann, and S.~Whiteson.
\newblock Varibad: A very good method for bayes-adaptive deep rl via
  meta-learning.
\newblock {\em ArXiv}, abs/1910.08348, 2020.

\bibitem{Weber1992OnTG}
R.~Weber.
\newblock On the gittins index for multiarmed bandits.
\newblock {\em Annals of Applied Probability}, 2:1024--1033, 1992.

\bibitem{Poupart2006AnAS}
P.~Poupart, N.~Vlassis, J.~Hoey, and K.~Regan.
\newblock An analytic solution to discrete bayesian reinforcement learning.
\newblock {\em Proceedings of the 23rd international conference on Machine
  learning}, 2006.

\bibitem{Dearden1999ModelBB}
R.~Dearden, N.~Friedman, and D.~Andre.
\newblock Model based bayesian exploration.
\newblock In {\em UAI}, 1999.

\bibitem{Osband2013MoreER}
Ian Osband, Daniel Russo, and Benjamin~Van Roy.
\newblock (more) efficient reinforcement learning via posterior sampling.
\newblock In {\em NIPS}, 2013.

\bibitem{Russo2014LearningTO}
Daniel Russo and Benjamin~Van Roy.
\newblock Learning to optimize via posterior sampling.
\newblock {\em Math. Oper. Res.}, 39:1221--1243, 2014.

\bibitem{ross2007bayes}
Stephane Ross, Brahim Chaib-draa, and Joelle Pineau.
\newblock Bayes-adaptive pomdps.
\newblock In {\em NIPS}, pages 1225--1232, 2007.

\bibitem{levine2020offline}
Sergey Levine, Aviral Kumar, George Tucker, and Justin Fu.
\newblock Offline reinforcement learning: Tutorial, review, and perspectives on
  open problems.
\newblock {\em arXiv preprint arXiv:2005.01643}, 2020.

\bibitem{FMNIST}
Han Xiao, Kashif Rasul, and Roland Vollgraf.
\newblock Fashion-mnist: a novel image dataset for benchmarking machine
  learning algorithms, 2017.

\bibitem{DQN}
Volodymyr Mnih, Koray Kavukcuoglu, David Silver, Andrei~A. Rusu, Joel Veness,
  Marc~G. Bellemare, Alex Graves, Martin Riedmiller, Andreas~K. Fidjeland,
  Georg Ostrovski, Stig Petersen, Charles Beattie, Amir Sadik, Ioannis
  Antonoglou, Helen King, Dharshan Kumaran, Daan Wierstra, Shane Legg, and
  Demis Hassabis.
\newblock Human-level control through deep reinforcement learning.
\newblock {\em Nature}, 518(7540):529--533, 2015.

\bibitem{monahan1982state}
George~E Monahan.
\newblock State of the art—a survey of partially observable markov decision
  processes: theory, models, and algorithms.
\newblock {\em Management science}, 28(1):1--16, 1982.

\bibitem{singhPOMDP}
Satinder~P. Singh, Tommi~S. Jaakkola, and Michael~I. Jordan.
\newblock Learning without state-estimation in partially observable markovian
  decision processes.
\newblock In {\em Proceedings of the Eleventh International Conference on
  International Conference on Machine Learning}, page 284–292, San Francisco,
  CA, USA, 1994. Morgan Kaufmann Publishers Inc.

\bibitem{Montfar2015GeometryAD}
Guido Mont{\'u}far, K.~Zahedi, and N.~Ay.
\newblock Geometry and determinism of optimal stationary control in partially
  observable markov decision processes.
\newblock {\em ArXiv}, abs/1503.07206, 2015.

\bibitem{openAIcube}
Ilge Akkaya, Marcin Andrychowicz, Maciek Chociej, Mateusz Litwin, Bob McGrew,
  Arthur Petron, Alex Paino, Matthias Plappert, Glenn Powell, Raphael Ribas,
  et~al.
\newblock Solving rubik's cube with a robot hand.
\newblock {\em arXiv preprint arXiv:1910.07113}, 2019.

\bibitem{simToRealPeng}
Xue~Bin Peng, Marcin Andrychowicz, Wojciech Zaremba, and Pieter Abbeel.
\newblock Sim-to-real transfer of robotic control with dynamics randomization.
\newblock {\em CoRR}, abs/1710.06537, 2017.

\bibitem{eysenbachMaxEnt}
Benjamin Eysenbach and Sergey Levine.
\newblock If maxent rl is the answer, what is the question?
\newblock {\em arXiv preprint arXiv:1910.01913}, 2019.

\bibitem{bootstrappedDQN}
Ian Osband, Charles Blundell, Alexander Pritzel, and Benjamin Van~Roy.
\newblock Deep exploration via bootstrapped dqn.
\newblock In D.~Lee, M.~Sugiyama, U.~Luxburg, I.~Guyon, and R.~Garnett,
  editors, {\em Advances in Neural Information Processing Systems}, volume~29.
  Curran Associates, Inc., 2016.

\bibitem{Schulman2017ProximalPO}
John Schulman, F.~Wolski, Prafulla Dhariwal, Alec Radford, and Oleg Klimov.
\newblock Proximal policy optimization algorithms.
\newblock {\em ArXiv}, abs/1707.06347, 2017.

\bibitem{Distral}
Yee Teh, Victor Bapst, Wojciech~M. Czarnecki, John Quan, James Kirkpatrick,
  Raia Hadsell, Nicolas Heess, and Razvan Pascanu.
\newblock Distral: Robust multitask reinforcement learning.
\newblock In I.~Guyon, U.~V. Luxburg, S.~Bengio, H.~Wallach, R.~Fergus,
  S.~Vishwanathan, and R.~Garnett, editors, {\em Advances in Neural Information
  Processing Systems}, volume~30. Curran Associates, Inc., 2017.

\bibitem{ghosh2018divideandconquer}
Dibya Ghosh, Avi Singh, Aravind Rajeswaran, Vikash Kumar, and Sergey Levine.
\newblock Divide-and-conquer reinforcement learning.
\newblock In {\em International Conference on Learning Representations}, 2018.

\bibitem{Fujimoto2018AddressingFA}
Scott Fujimoto, H.~V. Hoof, and D.~Meger.
\newblock Addressing function approximation error in actor-critic methods.
\newblock {\em ArXiv}, abs/1802.09477, 2018.

\bibitem{pytorchrl}
Ilya Kostrikov.
\newblock Pytorch implementations of reinforcement learning algorithms.
\newblock \url{https://github.com/ikostrikov/pytorch-a2c-ppo-acktr-gail}, 2018.

\bibitem{kakadeLangford2002}
Sham Kakade and John Langford.
\newblock Approximately optimal approximate reinforcement learning.
\newblock In {\em In Proc. 19th International Conference on Machine Learning}.
  Citeseer, 2002.

\end{thebibliography}
\bibliographystyle{unsrt}
\clearpage
\appendix

\section{FashionMNIST Classification}
\label{appendix:FashionMnist}
\subsection{Implementation details}
\label{appendix:FashionMnistImplementation}

\textbf{Environment:} The RL image classification environment consists of a dataset of labelled images. At the beginning of each episode, a new image and its corresponding label are chosen from the dataset, and held fixed for the entire episode. Each time-step, the agent must pick an action corresponding to one of the labels. If the picked label is correct, the agent gets a reward of $r=0$, and the episode ends, and if the picked label is incorrect, then the agent gets a reward of $r=-1$, and the episode continues to the next time-step (where it must guess another label for the \textit{same} image). The total return for a trajectory corresponds to the number of incorrect guesses the agent makes for the image. We enforce a time-limit of $20$ timesteps in the environment to prevent infinite-length trajectories of incorrect guessing. %

We train the agent on a dataset of $10000$ FashionMNIST images subsampled from the training set, and test on the FashionMNIST test dataset. Note that this task is very similar, but not exactly equivalent to maximizing predictive accuracy for supervised classification: if the episode ended regardless of whether or not the agent was correct, then it would correspond exactly to classification. %

\textbf{Algorithm:} We train a DQN agent on the training environment using the min-Q update rule from TD3 \citep{Fujimoto2018AddressingFA}. The Q-function architecture is a convolutional neural network (CNN) with the architecture from Kostrikov et al \citep{pytorchrl}. To ensure that the agent does not suffer from poor exploration during training, the replay buffer is pre-populated with one copy of every possible transition in the training environment (that is, where every action is taken for every image in the training dataset). The variant labelled ``Uniform after step 1'' in Figure \ref{fig:classification_results} follows the DQN policy for the first time-step, and if this was incorrect, then at all subsequent time-steps, takes a random action uniformly amongst the 10 labels. For the variant labelled ``Adaptive'', we train a classifier $p_\theta(y|x)$ on the training dataset of images with the same architecture as the DQN agent. The adaptive agent follows a process-of-elimination strategy; formally, the action taken by the adaptive agent at time-step $t$ is given by $\argmax_{a \notin \{a_1, \dots, a_{t-1}\}} p_\theta(y=a|x)$.

\subsection{Derivation of Bayes-optimal policies}
\label{appendix:FashionMnistTheory}

In the epistemic POMDP for the RL image classification problem, each episode, an image $x \in \gX$ is sampled randomly from the dataset, and a label $y \in \gY$ sampled randomly for this image from the distribution $p(y|x, \gD)$. This label is \textit{held fixed} for the entire episode. For notation, let $Y = \{1, \dots, d\}$, so that a label distribution $p(y|x)$ can be written as a vector in the probability simplex on $\R^d$. We emphasize two settings: $\gamma=0$ (the supervised learning setting), and $\gamma=1$ (an RL setting), where the expected return of an agent is the average number of incorrect guesses made. 

\subsubsection{Memory-based policy}

Since the optimal memory-based policy in a POMDP is deterministic \citep{monahan1982state}, we restrict ourselves to analyzing the performance of deterministic memory-based policies. In the following we will narrow the search space even further.

Since the episode ends after the agent correctly classifies an image and the reward structure incentives the agent to solve the task as quickly as possible, an agent acting optimally will never repeat the same action twice. Indeed, the agent will not have the opportunity to repeat the right action twice because the episode would have ended after the first time it tried it. Furthermore, trying a wrong action twice is also not optimal as in incurs addition negative reward. Therefore, we can limit our search space to policies that try each action once. These policies differ by the ordering in which they try each one of these $d$ labels.

At the beginning of every episode, a image $x$ is sampled uniformly at random among all training images and its true label $y$ (during that episode) is sampled from $p(y|x, \gD)$.  Let $\pi$ be policy that tries each of the $d$ actions exactly once in its first $d$ trials. Let $T^{\pi}_{y}$ denotes the time when policy $\pi$ tries action $y$. Note that $(T^{\pi}_{y})_{y\in \gY}$ is a permutation. When the label chosen is $y$, the cumulative reward of $\pi$ for that episode is given by $r = \frac{\gamma^{ T_y^\pi}-1}{1-\gamma}$ and the expected cumulative reward (across episodes)  is given by:

\begin{equation}
    \begin{aligned}
    J(\pi) := \sum_{y \in \gY} p(y|x, \gD) \frac{\gamma^{ T_y^\pi}-1}{1-\gamma} = \frac{1}{1-\gamma}\left(\sum_{y \in \gY} p(y|x, \gD) \gamma^{ T_y^\pi} - 1\right)
    \end{aligned}
\end{equation}

From that expression, we see that in order to maximize its expected cumulative reward, a policy $\pi$ has to maximize $\sum_{y \in \gY} p(y|x, \gD) \gamma^{ T_y^\pi}$ which can be interpreted as the dot product of the vector $[p(y|x, \gD)]_{y \in \gY}$ and $[\gamma^{ T_y^\pi}]_{y \in \gY}$. By the rearrangement inequality, we know that this dot product is maximized when the components of the vectors are arranged in the same ordering. 

If we denote by $y_{(1)}, \dots y_{(d)}$ be the labels sorted in order of probability under the belief distribution: $p(y_{(1)} | x, \gD) \geq p(y_{(2)} | x, \gD) \geq \dots \geq p(y_{(d)} | x, \gD)$. Since $0<\gamma<1$ the rearrangement inequality implies that 
 the expected return is maximized when $T_{y_{(t)}}^\pi = t$. This corresponds to a policy that tries the labels sequentially from the most likely to the least likely.

\subsubsection{Memoryless policy}
\label{app:memorylessPolicy}
In this section, we will derive the optimal memoryless policy. Consider a memoryless policy that takes actions according to the distribution $\pi(\cdot|x)$ for the image $x$. When the true label is $y$ for the image $x$, the number of incorrect guesses is distributed as $\text{Geom}(p=\pi(y|x))$.

When the agent guesses correctly the label $y$ at the $t-$th guess then the cumulative reward is given by $r= - \frac{1-\gamma^t}{1-\gamma}$.  This happens with probability $(1-\pi(y\mid x))^t \times \pi(y \mid x)$. The expected return for policy $\pi$ evaluated on image $x$ is then given by: 

\begin{equation}
\begin{aligned}   
J(\pi|x) &= - \sum_{y\in\gY} \sum_{t=0}^\infty (1-\pi(y|x))^t \pi(y|x) \frac{1 - \gamma^t}{1-\gamma} p(y|x, \gD) \\
&=\sum_{y\in\gY} p(y|x, \gD) \frac{\pi(y \mid x)-1}{1 - \gamma(1-\pi(y|x))}
\end{aligned}
\end{equation}

When $\gamma=0$ (supervised learning problem), $ J(\pi) = \sum_{y\in\gY} p(y|x, \gD) \left(\pi(y \mid x)-1\right)$ is a linear function of $\pi$ and as expected, the optimal policy is to deterministically choose the label with the highest probability:$\pi^{*}(y \mid x) = 1\left[y = \argmax_{y\in\gY} p(y|x, \gD)\right]$.

When $\gamma>0$, the optimal policy is the solution to a constrained optimization problem that can be solved with Lagrange multipliers. When $\gamma=1$, the optimal policy can be written explicitly as: 
\begin{equation}
    \pi^{*} (y\mid x) = \frac{1}{\lambda} \sqrt{p(y|x, \gD)}
\end{equation}
where $\lambda$ is a normalization constant.

\section{Theoretical Results}
\epistemicpomdpdefn*
\begin{proof}
This proposition follows directly from the definition of the epistemic POMDP. If the MDP $\gM$ is sampled from $\gP(\gM)$ and $\gD$ is witnessed, then the posterior distribution over MDPs is given by $\gP(\gM | \gD)$, and the expected test-time return of $\pi$ given the evidence is 
\[\E_{\gM \sim \gP(\gM)}[J_\gM(\pi) | \gD] \coloneqq \E_{\gM \sim \gP(\gM | \gD)}[J_{\gM}(\pi)].\]

In the epistemic POMDP, where an episode corresponds to randomly sampling an MDP from $\gP(\gM | \gD)$, and a single episode being evaluated in this MDP, the expected return can be expressed identically:

\[J_{\gM^\po}(\pi) \coloneqq \E_{\gM \sim \gP(\gM | \gD)}[\E_{\pi, \gM}[\sum_{i=0}^\infty \gamma^t r(s_t, a_t)]] = \E_{\gM \sim \gP(\gM | \gD)}[J_{\gM}(\pi)].\]

\end{proof}
\label{appendix:Sec5}
\subsection{Optimal MDP Policies can be Arbitrarily Suboptimal}

\begin{restatable}{proposition}{dominatedRandom}
\label{prop:deterministic-dominated-random}
Let $\eps > 0$. There exists posterior distributions $\gP(\gM | \gD)$ where a deterministic Markov policy $\pi$ is optimal with probability at least $1-\epsilon$,
\vspace{-0.5em}
\begin{equation}
P_{\gM \sim \gP(\gM|\gD)}\left(\pi \in \argmax_{\pi'} J_\gM(\pi')\right) \geq 1-\eps,\end{equation}
\vspace{-0.5em}
but is outperformed by a uniformly random policy in the epistemic POMDP: $J_{\gM^\po}(\pi) < J_{\gM^\po}(\pi_{\text{unif}})$.
\end{restatable}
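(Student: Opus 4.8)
The plan is to exhibit the failure with the simplest possible posterior: a collection of ``guess-the-label'' MDPs of exactly the kind analyzed in Section~\ref{sec:classification_as_rl}. Concretely, I would take a single non-terminal state and $K \geq 2$ actions $a_1, \dots, a_K$, and let $\gM_k$ be the MDP in which $a_k$ is the unique correct guess: playing $a_k$ yields reward $0$ and terminates the episode, while any other action yields reward $-1$ and leaves the agent in the same state. I would then place posterior mass $1-\eps$ on $\gM_1$ and the remaining mass $\eps$ on $\gM_2$ (any single alternative suffices). The candidate policy is the deterministic Markov policy $\pi$ that always plays $a_1$.

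First I would verify the optimality claim. In $\gM_1$ the optimal return is $0$, achieved by guessing $a_1$ immediately, so $\pi$ is optimal there; in $\gM_2$ it is not. Hence $\pi$ lies in $\argmax_{\pi'} J_\gM(\pi')$ on exactly the event $\{\gM = \gM_1\}$, which has posterior probability $1-\eps$, giving the required bound. Next I would compute both sides of the comparison using Proposition~\ref{prop:epistemic_pomdp_defn}, which reduces $J_{\gM^\po}$ to the posterior average of the per-MDP returns. Since $\pi$ solves $\gM_1$ perfectly but guesses wrong at every step of $\gM_2$, accruing $-1$ forever, its POMDP return is $J_{\gM^\po}(\pi) = (1-\eps)\cdot 0 + \eps\cdot\bigl(-\tfrac{1}{1-\gamma}\bigr) = -\tfrac{\eps}{1-\gamma}$. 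The uniform policy, by contrast, guesses correctly with probability $1/K$ at each step in every MDP, so the number of wrong guesses is geometric and a routine summation gives the same bounded value $J_{\gM^\po}(\pi_{\text{unif}}) = -\tfrac{1-1/K}{1-(1-1/K)\gamma}$ in each MDP, hence in the POMDP.

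Finally I would choose the discount to separate the two. The uniform return stays bounded (it tends to $-(K-1)$ as $\gamma \to 1$), whereas $-\eps/(1-\gamma) \to -\infty$; so for any fixed $\eps$ there is a $\gamma < 1$ making $-\eps/(1-\gamma) < J_{\gM^\po}(\pi_{\text{unif}})$. Solving the inequality explicitly (with $K=2$ it reduces to $\gamma > \tfrac{1-2\eps}{1-\eps}$) shows a valid $\gamma$ always exists, completing the construction. The only real subtlety — the ``main obstacle'' — is the bookkeeping that certifies a deterministic policy with catastrophic tail behaviour can be dominated by a spread-out policy: one must ensure the $\eps$-probability catastrophe (whose cost blows up as $\gamma\to 1$) outweighs the uniform policy's small but bounded per-episode cost incurred across \emph{all} MDPs in the posterior, which is precisely what taking $\gamma$ close to $1$ guarantees.
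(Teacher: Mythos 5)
Your proof is correct, but it takes a genuinely different route from the paper's. The paper's construction uses two non-terminating two-state MDPs with actions ``stay'' and ``switch'': switching earns $+1$ in $\gM_A$ (posterior mass $1-\eps$) and $-c$ in $\gM_B$ (mass $\eps$), while staying earns $0$. The deterministic ``always switch'' policy is optimal in $\gM_A$, and because the uniform policy's return is exactly half the deterministic policy's return in \emph{every} MDP, the domination is immediate once $c > \tfrac{1}{\eps}-1$ makes the deterministic policy's posterior-averaged return negative; the discount $\gamma$ is held fixed and the separation comes entirely from scaling the catastrophic reward magnitude. You instead keep rewards bounded in $\{0,-1\}$, use terminating guess-the-label MDPs (essentially the paper's own Section~\ref{sec:classification_as_rl} environment), and obtain the separation by pushing $\gamma \to 1$: the deterministic policy's $\eps$-probability catastrophe costs $-\eps/(1-\gamma) \to -\infty$, while the uniform policy escapes every MDP after a geometric number of wrong guesses, so its return stays bounded (your value $-\tfrac{1-1/K}{1-(1-1/K)\gamma} \to -(K-1)$ checks out, as does the threshold $\gamma > \tfrac{1-2\eps}{1-\eps}$ for $K=2$). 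What each buys: the paper's factor-of-two trick works for any fixed $\gamma$ and needs no episode termination, making the arithmetic essentially one line; your construction shows the failure arises even with unit-bounded rewards, purely from horizon effects, and ties the counterexample directly to the sequential classification example that motivates the paper, which makes it arguably the more instructive of the two.
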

\begin{proof}
Consider two deterministic MDPs, $\gM_A$, and $\gM_B$ that both have two states and two actions: ``stay'' and ''switch''.
In both MDPs, the reward for the ``stay'' action is always zero.
In~$\gM_A$ the reward for ``switch'' is always 1, while in~$\gM_B$ the reward for ``switch'' is $-c$ for~$c>0$.
The probability of being in~$\gM_B$ is~$\epsilon$ while the probability of being in~$\gM_A$ is~$1-\epsilon$.
Clearly, the policy ``always switch'' is optimal in~$\gM_A$ and so is~$\epsilon$-optimal under the distribution on MDPs.
The expected discounted reward of the ``always switch'' policy is:
\begin{align}
    J(\pi_{\text{always switch}}) &= (1-\epsilon)\frac{1}{1-\gamma} - \epsilon\frac{c}{1-\gamma}%
    = \frac{1}{1-\gamma}(1-(c+1)\epsilon)\,.
\end{align}
On the other hand, we can consider a policy which selects actions uniformly at random.
In this case, the expected cumulative reward is %
\begin{align}
    J(\pi_{\text{random}}) &= (1-\epsilon)\frac{1}{2}\frac{1}{1-\gamma} - \epsilon \frac{c}{2}\frac{1}{1-\gamma}
    = \frac{1}{2}\frac{1}{1-\gamma}(1 - (c+1)\epsilon) = \frac{1}{2}J(\pi_{\text{always switch}})\,.
\end{align}
Thus for any~$\epsilon$ we can find a~${c > \frac{1}{\epsilon}-1}$ such that both policies have negative expected rewards and we prefer the random policy for being half as negative.
\end{proof}

\label{appendix:prop51}
\subsection{Bayes-optimal Policies May Take Suboptimal Actions Everywhere}

We formalize the remark that optimal policies for the MDPs in the posterior distribution may be poor guides for determining what the Bayes-optimal behavior is in the epistemic POMDP. The following proposition shows that there are epistemic POMDPs where the support of actions taken by the MDP-optimal policies is disjoint from the actions taken by the Bayes-optimal policy, so no method can ``combine'' the optimal policies from each MDP in the posterior to create Bayes-optimal behavior. 
\begin{proposition}
\label{prop:suboptimal-actions-everywhere}
There exist posterior distributions $\gP(\gM | \gD)$ where the support of the Bayes-optimal memoryless policy $\pi^{*\po}(a|s)$ is disjoint with that of the optimal policies in each MDP in the posterior. Formally, writing $\supp(\pi(a|s)) = \{a \in \gA: \pi(a|s) > 0\}$, then $\forall \gM$ with $\gP(\gM | \gD) > 0$ and $\forall s$:

\[\supp(\pi^{*\po}(a|s)) \cap \supp(\pi_\gM^*(a|s))  = \emptyset\]
\end{proposition}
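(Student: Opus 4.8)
The plan is to exhibit an explicit posterior supported on finitely many MDPs in which a single ``hedging'' action, strictly suboptimal in \emph{every} MDP of the posterior, is nonetheless the unique Bayes-optimal memoryless action. The key preliminary observation is that two actions can never suffice: if the Bayes-optimal memoryless support were disjoint from every $\supp(\pi^*_\gM)$ while only two actions exist, then either all MDPs share an optimal action (which would then also be POMDP-optimal, contradicting disjointness) or the supports are forced to overlap. I would therefore use (at least) three actions together with a single recurrent state, so that the memoryless restriction is genuinely active --- a Markovian policy collapses to a fixed distribution over actions and cannot use the rewards it observes to infer the hidden MDP, whereas a memory-based policy could.

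Concretely, I would take a single state $s$ with self-loops under all actions, discount $\gamma \in (0,1)$, and three actions $a_1,a_2,a_3$. Put two MDPs in the posterior, $\gM_A$ and $\gM_B$, each with probability $\tfrac12$, with $r_{\gM_A}(s,a_1)=1$, $r_{\gM_A}(s,a_2)=-L$, $r_{\gM_A}(s,a_3)=0$ and symmetrically $r_{\gM_B}(s,a_1)=-L$, $r_{\gM_B}(s,a_2)=1$, $r_{\gM_B}(s,a_3)=0$, for a penalty $L>1$. Since each MDP is single-state, its unique optimal policy is deterministic and selects its reward-$1$ action, so $\supp(\pi^*_{\gM_A})=\{a_1\}$ and $\supp(\pi^*_{\gM_B})=\{a_2\}$, and in particular $a_3$ lies in neither support.

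The core computation is to optimize the memoryless POMDP return. Writing a memoryless policy as a distribution $(p_1,p_2,p_3)$ on the simplex, Proposition~\ref{prop:epistemic_pomdp_defn} gives $J_{\gM^\po}(\pi)=\tfrac12 J_{\gM_A}(\pi)+\tfrac12 J_{\gM_B}(\pi)$, and each term is linear in $(p_1,p_2,p_3)$ with a common $\tfrac{1}{1-\gamma}$ scaling. Evaluating at the vertices yields $J_{\gM^\po}(a_3)=0$ while $J_{\gM^\po}(a_1)=J_{\gM^\po}(a_2)=\tfrac{1-L}{2(1-\gamma)}$, so for $L>1$ the objective reduces to $\tfrac{1-L}{2(1-\gamma)}(p_1+p_2)$, strictly maximized at $p_1=p_2=0$. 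Hence the Bayes-optimal memoryless policy deterministically plays $a_3$, with support $\{a_3\}$ disjoint from both $\{a_1\}$ and $\{a_2\}$, which establishes the claim.

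The main thing to get right is not any single calculation --- the objective is linear, so the optimization is immediate --- but the design of the reward gaps: $a_3$ must be \emph{strictly} dominated in each individual MDP, forcing it out of every $\supp(\pi^*_\gM)$, while the cross-penalty $L>1$ must make committing to either $a_1$ or $a_2$ strictly worse in expectation than the safe hedge. I would verify strictness at both ends so that all supports are genuine singletons and the disjointness is not an artifact of ties, and note that this construction can be embedded as a gadget inside any larger MDP, so the statement is not special to the bandit-like case.
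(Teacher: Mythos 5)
Your proof is correct and takes essentially the same route as the paper: the paper's construction also uses a zero-reward ``stay'' action alongside two ``switch'' actions that pay $+1$ in one MDP and $-2$ in the other (your $L=2$ case), under a uniform two-MDP posterior, and concludes identically that the safe action is uniquely Bayes-optimal while each MDP's optimal support is a disjoint singleton. Your single-state self-loop formulation and general penalty $L>1$ are only cosmetic variations on that gadget.
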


\begin{proof}
The proof is a simple modification of the construction in Proposition 5.1. Consider two deterministic MDPs, $\gM_A$, and $\gM_B$ with equal support under the posterior, where both have two states and three actions: ``stay'', ''switch 1'', and ``switch 2''. In both MDPs, the reward for the ``stay'' action is always zero. In~$\gM_A$ the reward for ``switch'' is always 1, while in~$\gM_B$ the reward for ``switch'' is $-2$. The reward structure for ``switch 2'' is flipped: in $\gM_A$, the reward for ``switch 2'' is $-2$, and in $\gM_B$, the reward is $1$. Then, the policy ``always switch'' is optimal in $\gM_A$, and the policy ``always switch 2'' is optimal in $\gM_B$. However, any memoryless policy that takes either of these actions receives negative reward in the epistemic POMDP, and is dominated by the Bayes-optimal memoryless policy ``always stay'', which achieves 0 reward.
\end{proof}

\label{appendix:suboptimalActions}
\subsection{MaxEnt RL is Optimal for a Choice of Prior}

We describe a special case of the construction of Eysenbach and Levine \citep{eysenbachMaxEnt}, which shows that maximum-entropy RL in a bandit problem recovers the Bayes-optimal POMDP policy in an epistemic POMDP similar to that described in the RL image classification task.

Consider the family of MDPs $\{\gM_k\}_{k \in [n]}$ each with one state and $n$ actions, where taking action $k$ in MDP $\gM_k$ yields zero reward and the episode ends, and taking any other action yields reward $-1$ and the episode continues. Effectively, $\gM_k$ corresponds to a first-exit problem with ``goal action'' $k$. Note that this MDP structure is exactly what we have for the RL image classification task for a single image. Also consider the surrogate bandit MDP $\hat{\gM}$, also with one state and $n$ actions, but in which taking action $k$ yields reward $r_k$ with immediate episode termination. The following proposition shows that running max-ent RL in $\hat{\gM}$ recovers the optimal memoryless policy in a particular epistemic POMDP supported on $\{\gM_k\}_{k \in [n]}$.

\begin{proposition}
Let $\pi^* = \arg\max_{\pi \in \Pi} J_{\hat{\gM}}(\pi) + \gH(\pi)$ be the max-ent solution in the surrogate bandit MDP $\hat{\gM}$. Define the distribution $\gP(\gM|\gD)$ on $\{\gM_k\}_{k \in [n]}$ as $\gP(\gM_k|\gD) = \frac{\exp(2r_k)}{\sum_{j} \exp(2r_j)}$. Then, $\pi$ is the optimal memoryless policy in the epistemic POMDP $\gM^\po$ defined by $\gP(\gM | \gD)$. 
\end{proposition}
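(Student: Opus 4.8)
The plan is to compute both policies in closed form and verify they coincide after normalization. First I would solve the max-ent problem in the surrogate bandit $\hat{\gM}$. Since $\hat{\gM}$ has a single state and terminates immediately upon any action, its objective collapses to $J_{\hat{\gM}}(\pi) + \gH(\pi) = \sum_{k} \pi(k) r_k - \sum_k \pi(k)\log \pi(k)$. This is strictly concave on the simplex, so setting its gradient to zero (with a Lagrange multiplier enforcing normalization) yields the Boltzmann distribution $\pi^*(k) \propto \exp(r_k)$.

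Next I would characterize the optimal memoryless policy $\pi^{*\po}$ in the epistemic POMDP $\gM^\po$. The key observation is that $\gM^\po$ has exactly the structure of the single-image RL classification problem analyzed in Appendix~\ref{appendix:FashionMnistTheory}: each posterior MDP $\gM_k$ is a first-exit problem whose unique ``goal action'' is $k$ (playing the role of the true label), where taking the goal action ends the episode with reward $0$ while every other action yields $-1$ and continues, and the posterior weight $\gP(\gM_k|\gD)$ plays the role of the label distribution $p(y|x,\gD)$. For a memoryless policy selecting action $k$ with probability $\pi(k)$, the number of incorrect guesses when the goal action is $k$ is geometric with success probability $\pi(k)$, so the undiscounted return is $J_{\gM^\po}(\pi) = 1 - \sum_{k} \gP(\gM_k|\gD)/\pi(k)$. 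Maximizing this over the simplex is equivalent to minimizing $\sum_k \gP(\gM_k|\gD)/\pi(k)$ subject to $\sum_k \pi(k) = 1$; by Cauchy--Schwarz (or the Lagrange-multiplier argument of Appendix~\ref{appendix:FashionMnistTheory}), the minimizer is $\pi^{*\po}(k) \propto \sqrt{\gP(\gM_k|\gD)}$.

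The final step is purely algebraic: substituting the prescribed posterior $\gP(\gM_k|\gD) = \exp(2r_k)/\sum_j \exp(2r_j)$ gives $\sqrt{\gP(\gM_k|\gD)} \propto \exp(r_k)$, which is exactly the un-normalized form of the softmax $\pi^*$ from the first step. Since both policies are the normalization of $\exp(r_k)$ over the same action set, they are identical, establishing that the max-ent solution $\pi^*$ is the Bayes-optimal memoryless policy in $\gM^\po$.

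The only genuinely delicate point — and the step I would guard most carefully — is the reduction establishing that $\gM^\po$ is an instance of the classification POMDP, so that the $\sqrt{\cdot}$ law applies. This relies on the undiscounted (first-exit) structure: it is precisely at $\gamma = 1$ that the memoryless return collapses to the clean form $1 - \sum_k \gP(\gM_k|\gD)/\pi(k)$ whose optimizer is $\propto \sqrt{p}$. For $\gamma < 1$ the optimal POMDP policy solves a more involved constrained program and would no longer match a fixed-temperature softmax, so the factor of $2$ in the exponent of the prior (which converts $\sqrt{\exp(2r_k)}$ into $\exp(r_k)$) is calibrated specifically to the undiscounted entropy-regularized objective.
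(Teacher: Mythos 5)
Your proof is correct and follows essentially the same route as the paper's: the paper likewise identifies the max-ent bandit solution as the softmax $\pi^*(k) \propto \exp(r_k)$ (citing Eysenbach and Levine, Lemma 4.1) and invokes its earlier memoryless-policy computation for the classification POMDP, which shows the Bayes-optimal memoryless policy at $\gamma = 1$ is $\propto \sqrt{\gP(\gM_k \mid \gD)}$, so that the factor of $2$ in the prior makes the two coincide. Your only departures are cosmetic: you derive both closed forms inline rather than by citation (using Cauchy--Schwarz in place of the paper's Lagrange-multiplier argument for the square-root law), and you correctly flag the $\gamma = 1$ first-exit structure on which the paper's proof also relies.
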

\begin{proof}
See Eysenbach and Levine \citep[Lemma 4.1]{eysenbachMaxEnt}. The optimal policy $\pi^*$ is given by $\pi^*(a=k) = \frac{\exp(r_k)}{\sum_{j} \exp(r_j)}$. We know from Appendix~\ref{app:memorylessPolicy} that this policy is optimal for epistemic POMDP $\gM^{\po}$ when $\gamma=1$.
\end{proof}
If allowing time-varying reward functions, this construction can be extended beyond ``goal-action taking'' epistemic POMDPs to the more general ``goal-state reaching'' setting in an MDP, where the agent seeks to reach a specific goal state, but the identity of the goal state hidden from the agent \citep[Lemma 4.2]{eysenbachMaxEnt}. 

\label{appendix:MaxEntRL}
\subsection{Failure of MaxEnt RL and Uncertainty-Agnostic Regularizations}
We formalize the remark made in the main text that while the Bayes-optimal memoryless policy is stochastic, methods that promote stochasticity in an uncertainty-agnostic manner can fail catastrophically. We begin by explaining the significance of this result: it is well-known that stochastic policies can be arbitrarily sub-optimal in a single MDP, and can be outperformed by deterministic policies. The result we describe is more subtle than this: there are epistemic POMDPs where any attempt at being stochastic in an uncertainty-agnostic manner is sub-optimal, and \textit{ also} any attempt at acting completely deterministically is also sub-optimal. Rather, the characteristic of Bayes-optimal behavior is to be stochastic in \textit{some} states (where it has high uncertainty), and not stochastic in others, and a useful stochastic regularization method must modulate the level of stochasticity to calibrate with regions where it has high epistemic uncertainty.
\begin{proposition}
\label{prop:stochastic-dominated-random}
Let $\alpha > 0, c > 0$. There exist posterior distributions $\gP(\gM | \gD)$, where the Bayes-optimal memoryless policy $\pi^{*\po}$ is stochastic. However, every memoryless policy $\pi_s$ that is ``everywhere-stochastic'', in that $\forall s \in \gS: \gH(\pi_s(a| s)) > \alpha$,
can have performance arbitrarily close to the uniformly random policy:
\[ \frac{J(\pi_s) - J(\pi_\text{unif})}{J(\pi^{*\po}) - J(\pi_\text{unif})} < c \]
\end{proposition}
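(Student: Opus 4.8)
The plan is to construct a posterior $\gP(\gM\mid\gD)$ that decouples two opposing requirements: a long \emph{deterministic corridor} on which every MDP in the posterior agrees and where success demands a precise deterministic action sequence, followed by a small \emph{uncertain} gadget that forces the Bayes-optimal policy to be stochastic. The guiding intuition is exactly the one flagged in the main text --- Bayes-optimal behavior must be stochastic where the agent is uncertain and deterministic where it is not --- whereas an everywhere-stochastic policy is forced to ``leak'' probability onto catastrophic actions throughout the corridor.

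Concretely, I would take corridor states $s_0,\dots,s_{N-1}$, each with $k\ge 2$ actions, where one distinguished action advances $s_i \mapsto s_{i+1}$ and every other action transitions to an absorbing failure state $\perp$ with zero reward thereafter; all of these transitions and rewards are identical across the support of the posterior, and a single large reward $R>0$ is collected only upon taking the correct action at $s_{N-1}$. The terminal state $s_N$ hosts a two-MDP first-exit classification gadget (two equally likely MDPs $\gM_A,\gM_B$ with two actions whose correct label is swapped, a wrong guess incurring $-1$ and leaving the agent at $s_N$), for which the derivation in Appendix~\ref{app:memorylessPolicy} shows the optimal memoryless action is the uniform stochastic one. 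This makes the Bayes-optimal memoryless policy $\pi^{*\po}$ advance deterministically along the corridor (optimal because $R$ dwarfs the bounded cost of the gadget) yet stochastic at $s_N$, which establishes the claim that $\pi^{*\po}$ is stochastic.

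The core of the argument is a quantitative entropy-to-failure bound. For any distribution $p$ on $k$ actions the grouping inequality gives $\gH(p)\le H_b(p_{\max})+(1-p_{\max})\log(k-1)$, whose right-hand side tends to $0$ as $p_{\max}\to 1$; hence $\gH(p)>\alpha$ forces $p_{\max}\le 1-\delta_0$ for some $\delta_0=\delta_0(\alpha,k)>0$. Applying this at each corridor state, any everywhere-stochastic $\pi_s$ advances with probability at most $1-\delta_0$ per state, so its probability of traversing the whole corridor satisfies $P_{\mathrm{succ}}(\pi_s)\le(1-\delta_0)^N$, while $P_{\mathrm{succ}}(\pi_{\text{unif}})=k^{-N}$ and $P_{\mathrm{succ}}(\pi^{*\po})=1$.

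Finally I would write each expected epistemic-POMDP return as $J(\pi)=\gamma^{N-1}R\,P_{\mathrm{succ}}(\pi)+\epsilon_N(\pi)$, with $|\epsilon_N(\pi)|$ bounded uniformly in $R$ since the only other rewards come from the bounded-magnitude gadget, reached with probability $P_{\mathrm{succ}}(\pi)$. Dividing numerator and denominator of the target ratio by $R$ and sending $R\to\infty$ leaves the leading-order value $\frac{P_{\mathrm{succ}}(\pi_s)-k^{-N}}{1-k^{-N}}\le \frac{(1-\delta_0)^N}{1-2^{-N}}$, which $\to 0$ as $N\to\infty$. One therefore first chooses $N$ large enough to push this bound below $c$, then $R$ large enough that the $\epsilon_N$ corrections are negligible; the case where the numerator is negative only makes the ratio smaller. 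I expect the main obstacle to be this two-parameter limit bookkeeping together with the decoupling design itself: making the uncertain gadget rich enough to force stochasticity of $\pi^{*\po}$ while cheap enough (of magnitude $O(1)$, independent of $R$) that it cannot rescue an everywhere-stochastic policy that has already failed the corridor.
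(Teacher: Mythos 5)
Your proof is correct, and it rests on the same quantitative engine as the paper's proof, but the construction is genuinely different. The shared key lemma is the entropy-to-leakage bound: $\gH(\pi_s(\cdot \mid s)) > \alpha$ forces $\max_a \pi_s(a \mid s) \le 1 - \delta_0$ for some $\delta_0(\alpha, k) > 0$, so over a long stretch of states where all MDPs in the posterior agree on the required action, every everywhere-stochastic policy succeeds with probability at most $(1-\delta_0)^N$, exponentially worse than the Bayes-optimal policy. The difference is where the epistemic uncertainty lives. The paper uses a depth-$n$ binary tree whose goal is either the left-most or the right-most leaf (two equally likely MDPs), with failure \emph{resetting} the agent to the root: the uncertainty about the goal itself forces the Bayes-optimal memoryless policy to randomize at the root while acting deterministically below it, and the reset dynamics give closed-form returns for all three policies, so a single limit $n \to \infty$ with $O(1)$ rewards finishes the argument. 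You instead decouple the deterministic requirement (a corridor identical across the posterior, with absorbing failure) from the uncertainty (a terminal two-MDP classification gadget whose optimal memoryless play is uniform), gluing them with a large terminal reward $R$. This buys transparency --- the ``stochastic where uncertain, deterministic where certain'' structure of Bayes-optimal behavior is explicit, and the gadget can be analyzed by citing the derivation in Appendix~\ref{app:memorylessPolicy} --- but it costs a second asymptotic parameter and the attendant bookkeeping: for each fixed $N$ you must take $R$ large enough that the bounded gadget terms become negligible, and the bound must hold uniformly over the class of everywhere-stochastic policies, which your uniform estimates $P_{\mathrm{succ}}(\pi_s) \le (1-\delta_0)^N$ and $|\epsilon_N(\pi)| \le \gamma^N/(1-\gamma)$ do deliver. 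Two points to make explicit if you flesh this out: (i) for the class of everywhere-stochastic policies to be nonempty you need $\alpha < \log k$, so the number of actions $k$ should be chosen large relative to $\alpha$ (the paper's two-action tree carries the same implicit restriction); and (ii) Appendix~\ref{app:memorylessPolicy} derives the $\sqrt{p}$ rule only at $\gamma = 1$, so the claim that the uniform action is gadget-optimal for general $\gamma \in (0,1)$ needs a one-line check (by symmetry of the two equally likely labels, the appendix's return formula is maximized at the uniform policy, as a short derivative computation confirms).
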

\begin{figure}
    \centering
    \includegraphics[width=0.5\linewidth]{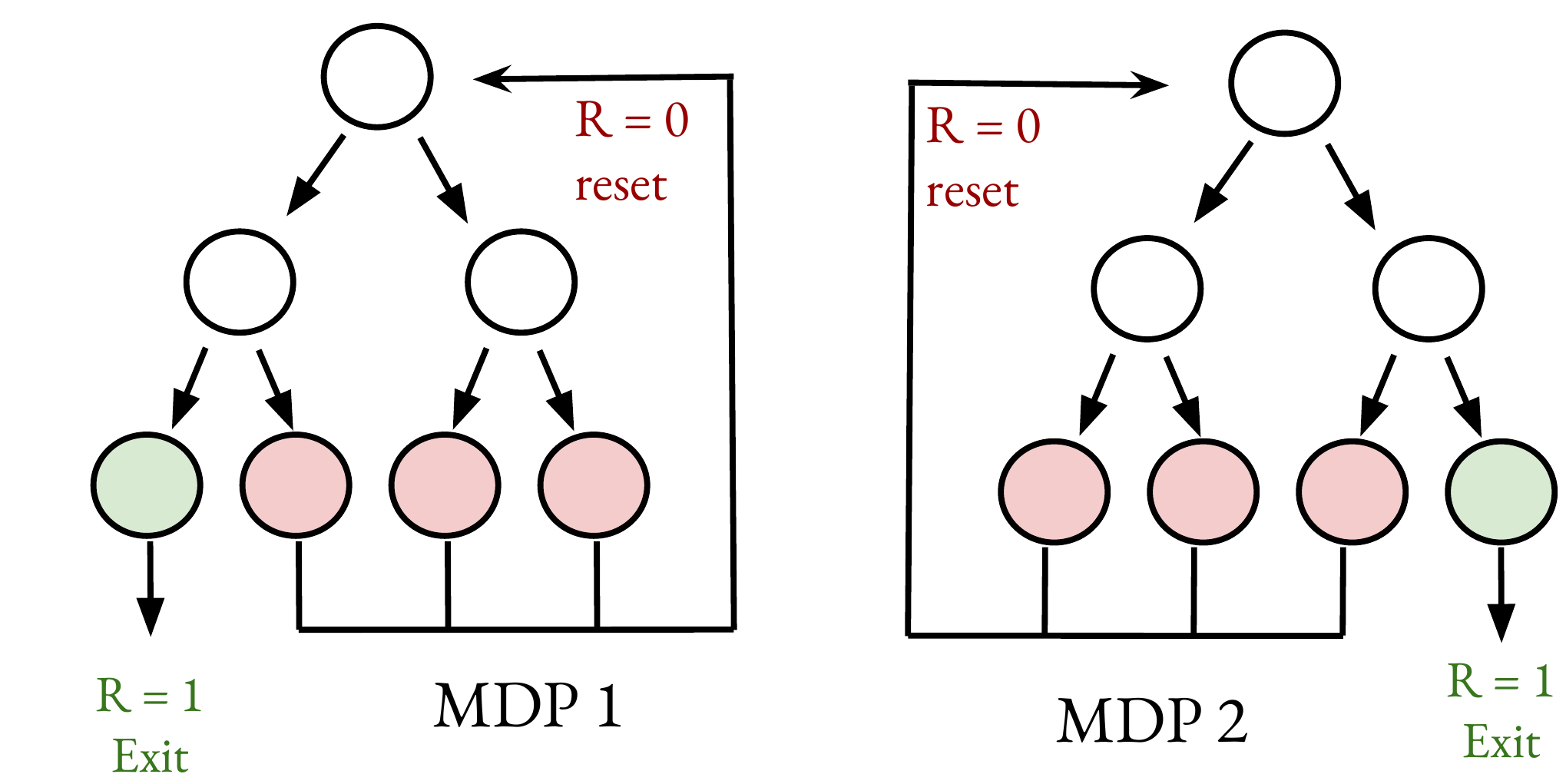}
    \caption{Visual description of Binary Tree MDPs described in proof of Proposition \ref{prop:stochastic-dominated-random} with depth $n=3$.}
    \label{fig:stochastic_illustration}
\end{figure}
\begin{proof}

Consider two binary tree MDP with $n$ levels, $\gM_1$ and $\gM_2$. A binary tree MDP, visualized in Figure \ref{fig:stochastic_illustration}, has $n$ levels, where level $k$ has $2^k$ states. On any level $k < n$, the agent can take a ``left'' action or a ``right'' action, which transitions to the corresponding state in the next level. On the final level, if the state corresponds to the terminal state (in green), then the agent receives a reward of $1$, and the episode exits, and otherwise a reward of $0$, and the agent returns to the top of the binary tree. The two binary tree MDPs $\gM_1$ and $\gM_2$ are identical except for the final terminal state: in $\gM_1$, the terminal state is the left-most state in the final level, and in $\gM_2$, the terminal state is the right-most state. Reaching the goal in $\gM_1$ corresponds to taking the ``left'' action repeatedly, and reaching the goal in $\gM_2$ corresponds to taking the ``right'' action repeatedly. We consider the posterior distribution that places equal mass on $\gM_1$ and $\gM_2$, $\gP(\gM_1|\gD)=\gP(\gM_2|\gD)=\frac{1}{2}$. A policy that reaches the correct terminal state with probability $p$ (otherwise reset) will visit the initial state a $\text{Geom}(p)$ number of times, and writing $\bar{\gamma} \coloneqq \gamma^n$, will achieve return $\frac{\bar{\gamma}p}{1 - \bar{\gamma} + p\bar{\gamma}} = \frac{1}{1 + \frac{1}{p} \frac{1-\bar{\gamma}}{\bar{\gamma}}}$.

\textit{Uniform policy:} A uniform policy randomly chooses between ``left'' and ``right'' at all states, and will reach all states in the final level equally often, so the probability it reaches the correct goal state is  $\frac{1}{2^n}$. Therefore, the expected return is $J(\pi_{\text{unif}}) = \frac{1}{1 + 2^n\frac{1-\bar{\gamma}}{\bar{\gamma}}}$.

\textit{Bayes-optimal memoryless policy:} The Bayes-optimal memoryless policy $\pi^{*\po}$ chooses randomly between ``left'' and ``right'' at the top level; on every subsequent level, if the agent is in the left half of the tree, the agent deterministically picks ``left'' and on the right half of the tree, the agent deterministically picks ``right''. Effectively, this policy either visits the left-most state or the right-most state in the final level. The Bayes-optimal memoryless policy returns to the top of the tree a $\text{Geom}(p=\frac{1}{2})$ number of times, and the expected return is given by $J(\pi^{*\po}) = \frac{1}{1 + 2\frac{1-\bar{\gamma}}{\bar{\gamma}}}$.

\textit{Everywhere-stochastic policy:} Unlike the Bayes-optimal policy, which is deterministic in all levels underneath the first, an everywhere-stochastic policy will sometimes take random actions at these lower levels, and therefore can reach states at the final level that are neither the left-most or right-most states (and therefore always bad). We note that if $\gH(\pi(a|s)) > \alpha$, then there is some $\beta > 0$ such that $\max_a \pi(a|s) < 1 - \beta$. For an $\alpha$-everywhere stochastic policy, the probability of taking at least one incorrect action increases as the depth of the binary tree grows, getting to the correct goal at most probability $\frac{1}{2}(1-\beta)^{n-1}$. The maximal expected return is therefore $J(\pi_s) \leq \frac{1}{1 + 2(\frac{1}{1-\beta})^{n-1} \frac{1-\bar{\gamma}}{\bar{\gamma}}}$

\[J(\pi^{*\po}) = \frac{1}{1 + 2\frac{1-\bar{\gamma}}{\bar{\gamma}}} ~~~~~~ J(\pi_s) = \frac{1}{1 + 2(\frac{1}{1-\beta})^{n-1} \frac{1-\bar{\gamma}}{\bar{\gamma}}} ~~~~~~J(\pi_{\text{unif}}) = \frac{1}{1 + 2^n\frac{1-\bar{\gamma}}{\bar{\gamma}}}\]

As $n\to \infty$, $J(\pi^{*\po}), J(\pi_s)$ and $J(\pi_{\text{unif}})$ will converge to zero. Using asymptotic analysis we can determine their speed of convergence and find that: 

\[J(\pi^{*\po}) \sim \frac{\bar{\gamma}}{2} ~~~~~~ J(\pi_s) \sim \frac{\bar{\gamma}}{ 2(\frac{1}{1-\beta})^{n-1}} ~~~~~~J(\pi_{\text{unif}}) \sim \frac{\bar{\gamma}}{ 2^n}\]

Using these asymptotics, we find that 

\[ \frac{J(\pi_s) - J(\pi_\text{unif})}{J(\pi^{*\po}) - J(\pi_\text{unif})} \sim \frac{1}{ (\frac{1}{1-\beta})^{n-1}} = (1-\beta)^{n-1}, \]

which shows that this ratio can be made arbitrarily small as we increase $n$. \qedsymbol

\textit{An aside: deterministic policies} While this proposition only discusses the failure mode of stochastic policies, \textit{all} deterministic memoryless policies in this environment also fail. A deterministic policy $\pi_d$ in this environment continually loops through one path in the binary tree repeatedly, and therefore will only ever reach one goal state, unlike the Bayes-optimal policy which visits both possible goal states. The best deterministic policy then either constantly takes the ``left'' action (which is optimal for $\gM_1$), or constantly takes the ``right'' action (which is optimal for $\gM_2$). Any other deterministic policy reaches a final state that is neither the left-most nor the right-most state, and will always get $0$ reward. The expected return of the optimal deterministic policy is $J(\pi_d) = \frac{\bar{\gamma}}{2}$, receiving $\bar{\gamma}$ reward in one of the MDPs, and $0$ reward in the other. When the discount factor $\gamma$ is close to $1$, the maximal expected return of a deterministic policy is approximately $\frac{1}{2}$, while the expected return of the Bayes-optimal policy is approximately $1$, indicating a sub-optimality gap.

\end{proof}
\label{appendix:CatastrophicFailure}
\subsection{Proof of Theorem 6.1}

\theoremLowerBound*
\begin{proof}
Before we begin, we recall some basic tools from analysis of MDPs. For a memoryless policy $\pi$, the state-action value function  $Q^\pi(s,a)$ is given by ${Q^\pi(s, a) = \E_\pi[\sum_{t \geq 0} \gamma^t r(s_t, a_t) | s_0 = s, a_0=a]}$. The advantage function $A^\pi(s, a)$ is defined as ${A^\pi(s, a) = Q^\pi(s, a) - \E_{a \sim \pi(\cdot|s)}[Q^\pi(s, a)]}$. The performance difference lemma \citep{kakadeLangford2002} relates the expected return of two policies $\pi$ and $\pi'$ in an MDP $\gM$ via their advantage functions as 

\begin{equation}
J_\gM(\pi') = J_{\gM}(\pi) + \frac{1}{1-\gamma}\E_{s \sim d_\gM^{\pi'}}[\E_{a \sim \pi'}[A_\gM^\pi(s ,a)]].
\end{equation}

We now begin the derivation of our lower bound:
\begin{equation}
\begin{aligned}
   J_{\hat{\gM}^{\po}}(\pi) &=\frac{1}{n} \sum_{i=1}^n J_{\gM_i}(\pi)\\
   &=\frac{1}{n} \sum_{i=1}^n  J_{\gM_i}(\pi_i) + \frac{1}{n} \sum_{i=1}^n  \left[J_{\gM_i}(\pi) -J_{\gM_i}(\pi_i) \right] \\
    &= \frac{1}{n} \sum_{i=1}^n  J_{\gM_i}(\pi_i) - \frac{1}{n(1-\gamma)} \sum_{i=1}^n  \mathbb{E}_{s\sim d_{\gM_i}^{\pi_i}}\left[ \mathbb{E}_{a\sim \pi_i}\left[A_{\gM_i}^{\pi}(s,a) \right]\right] \\
    &= \frac{1}{n} \sum_{i=1}^n  J_{\gM_i}(\pi_i) - \frac{1}{n(1-\gamma)} \sum_{i=1}^n  \mathbb{E}_{s\sim d_{\gM_i}^{\pi_i}}\left[ \mathbb{E}_{a\sim \pi_i}\left[A_{\gM_i}^{\pi}(s,a) \right]-\mathbb{E}_{a\sim \pi}\left[A_{\gM_i}^{\pi}(s,a) \right]\right] \\
\end{aligned}
\end{equation}
In the last equality we used the fact that $\mathbb{E}_{a\sim \pi}\left[A^{\pi}(s,a) \right] =0$. From there we proceed to derive a lower bound:
\begin{equation}
\begin{aligned}
    J_{\hat{\gM^\po}}(\pi) &= \frac{1}{n} \sum_{i=1}^n  J_{\gM_i}(\pi_i) - \frac{1}{n(1-\gamma)} \sum_{i=1}^n  \mathbb{E}_{s\sim d_{\gM_i}^{\pi_i}}\left[ \mathbb{E}_{a\sim \pi_i}\left[A_{\gM_i}^{\pi}(s,a) \right]-\mathbb{E}_{a\sim \pi}\left[A_{\gM_i}^{\pi}(s,a) \right]\right] \\
    &\geq \frac{1}{n} \sum_{i=1}^n  J_{\gM_i}(\pi_i) - \frac{2 r_{max}}{n(1-\gamma)^2} \sum_{i=1}^n \mathbb{E}_{s\sim d_{\gM_i}^{\pi_i}}\left[ D_{TV}\left(\pi_i(\cdot \mid s);\pi(\cdot \mid s)\right) \right] \\
    &\geq \frac{1}{n} \sum_{i=1}^n  J_{\gM_i}(\pi_i) - \frac{\sqrt{2} r_{max}}{(1-\gamma)^2n} \sum_{i=1}^n \mathbb{E}_{s\sim d_{\gM_i}^{\pi_i}}\left[ \sqrt{D_{KL}\left(\pi_i(\cdot \mid s)\,\, || \,\, \pi(\cdot \mid s)\right)} \right] \\
\end{aligned}
\end{equation}
where the first inequality is since $|A_{\gM_i}^\pi(s,a)| \leq \frac{r_{\max}}{1-\gamma}$ and the second from Pinsker's inequality. Our intention in this derivation is not to obtain the tighest lower bound possible, but rather to illustrate how bounding the advantage can lead to a simple lower bound on the expected return in the POMDP. The inequality can be made tighter using other bounds on $|A_{\gM_i}^\pi(s,a)|$, for example using $A_{\max} = \max_{i, s, a} |A_{\gM_i}^\pi(s,a)|$, or potentially a bound on the advantage that varies across state.
\end{proof}
\subsection{Proof of Proposition 6.1}
\propositionLinkFunction*

\begin{proof}
By Theorem~\ref{thm:lower_bound} we have that $\forall \alpha \geq \frac{\sqrt{2}r_{\text{max}}}{(1-\gamma)^2n}$:
\begin{equation}
J_{\hat{\gM^\po}}(f(\{\pi_i^*\})) \geq \frac{1}{n} \sum_{i=1}^n J_{\gM_i}(\pi_i^*) - \alpha \sum_{i=1}^n \E_{s \sim d_{\gM_i}^{\pi_i^*}}\left[\sqrt{ D_{KL}\left(\pi_i^*(\cdot | s)\,\, || \,\, f(\{\pi_i^*\})(\cdot | s)\right)}\right].
\end{equation}
Now, write $\pi'^* \in \argmax_\pi J_{\hat{\gM^\po}}(\pi)$ to be an optimal policy in the empirical epistemic POMDP, and consider the collection of policies $\{\pi'^*, \pi'^*, \dots, \pi'^*\}$.  Since $\{\pi_i^*\}$ is the optimal solution to Equation~\ref{eq:algo_objective}, we have
\begin{equation}
\begin{aligned}
J_{\hat{\gM^\po}}(f(\{\pi_i^*\})) &\geq \frac{1}{n} \sum_{i=1}^n J_{\gM_i}(\pi'^*) - \alpha \sum_{i=1}^n \E_{s \sim d_{\gM_i}^{\pi'^*}}\left[\sqrt{ D_{KL}\left(\pi'^*(\cdot | s)\,\, || \,\, f(\{\pi'^*\})(\cdot | s)\right)}\right]\\
&= \frac{1}{n} \sum_{i=1}^n J_{\gM_i}(\pi'^*)\\
&= J_{\hat{\gM^\po}}(\pi'^*),
\end{aligned}
\end{equation}
where the second line here uses the fact that $f(\pi'^*, \dots, \pi'^*) = \pi'^*$. Therefore $\pi^* \coloneqq f(\{\pi_i^*\})$ is optimal for the empirical epistemic POMDP.
\end{proof}

\section{Procgen Implementation and Experimental Setup}
\label{appendix:LEEP}

We follow the training and testing scheme defined by Cobbe et al. \citep{Cobbe2020LeveragingPG} for the Procgen benchmarks: the agent trains on a fixed set of levels, and is tested on the full distribution of levels. Due to our limited computational budget, we train on the so-called ``easy'' difficulty mode using the recommended $200$ training levels. Nonetheless, many prior work has found a significant generalization gap between test and train performance even in this easy setting, indicating it a useful benchmark for generalization \citep{Cobbe2020LeveragingPG, Raileanu2020AutomaticDA, Jiang2020PrioritizedLR}. We implemented LEEP on top of an existing open-source codebase released by Jiang et al. \citep{Jiang2020PrioritizedLR}. Full code is provided in the supplementary for reference.

LEEP maintains $n=4$ policies $\{\pi_i\}_{i \in [n]}$, each parameterized by the ResNet architecture prescribed by Cobbe et al. \citep{Cobbe2020LeveragingPG}. In LEEP, each policy is optimized to maximize the entropy-regularized PPO surrogate objective alongside a one-step KL divergence penalty between itself and the linked policy $\max_i \pi_i$; gradients are not taken through the linked policy. 

\[\E_{\pi_i}[\min(r_t(\pi)A^\pi(s,a), \text{clip}(r_t(\pi), 1-\eps, 1+\eps)A^\pi(s,a) + \beta \gH(\pi_i(a|s)) - \textcolor{purple}{\alpha D_{KL}(\pi_i(a|s) \| \max_j \pi_j(a|s))}] \]

The penalty hyperparameter $\alpha$ was obtained by performing a hyperparameter search on the Maze task for all the comparison methods (including LEEP) amongst $\alpha \in [0.01, 0.1, 1.0, 10.0]$. Since LEEP trains $4$ policies using the same environment budget as a single PPO policy, we change the number of environment steps per PPO iteration from $16384$ to $4096$, so that the PPO baseline and each policy in our method takes the same number of PPO updates. All other PPO hyperparameters are taken directly from \citep{Jiang2020PrioritizedLR}. 

In our implementation, we parallelize training of the policies across GPUs, using one GPU for each policy. We found it infeasible to run more ensemble members due to GPU memory constraints without significant slowdown in wall-clock time. Running LEEP on one Procgen environment for 50 million steps requires approximately 5 hrs in our setup on a machine with four Tesla T4 GPUs. 
\clearpage
\section{Procgen Results}
\label{appendix:morePlots}
\begin{figure}[H]
    \centering
    \includegraphics[width=\linewidth]{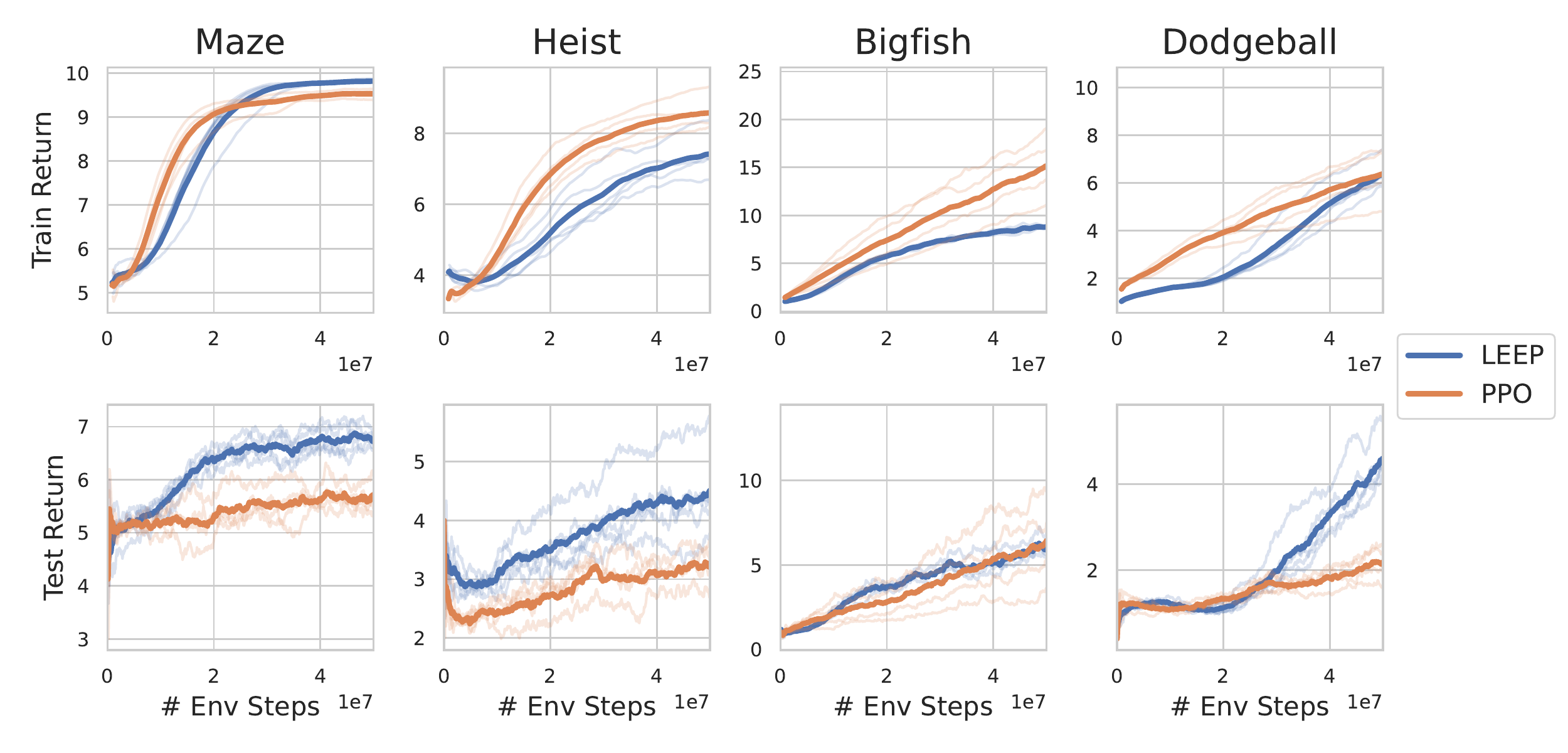}
    \caption{Training (top) and test (bottom) returns for LEEP and PPO on four Procgen environments. Results averaged across 5 random seeds. LEEP achieves equal or higher training return compared to PPO, while having a lower generalization gap between test and training returns.}
    \label{fig:appendix_all_procgen}
\end{figure}

\begin{figure}[H]
    \centering
    \includegraphics[width=0.8\linewidth]{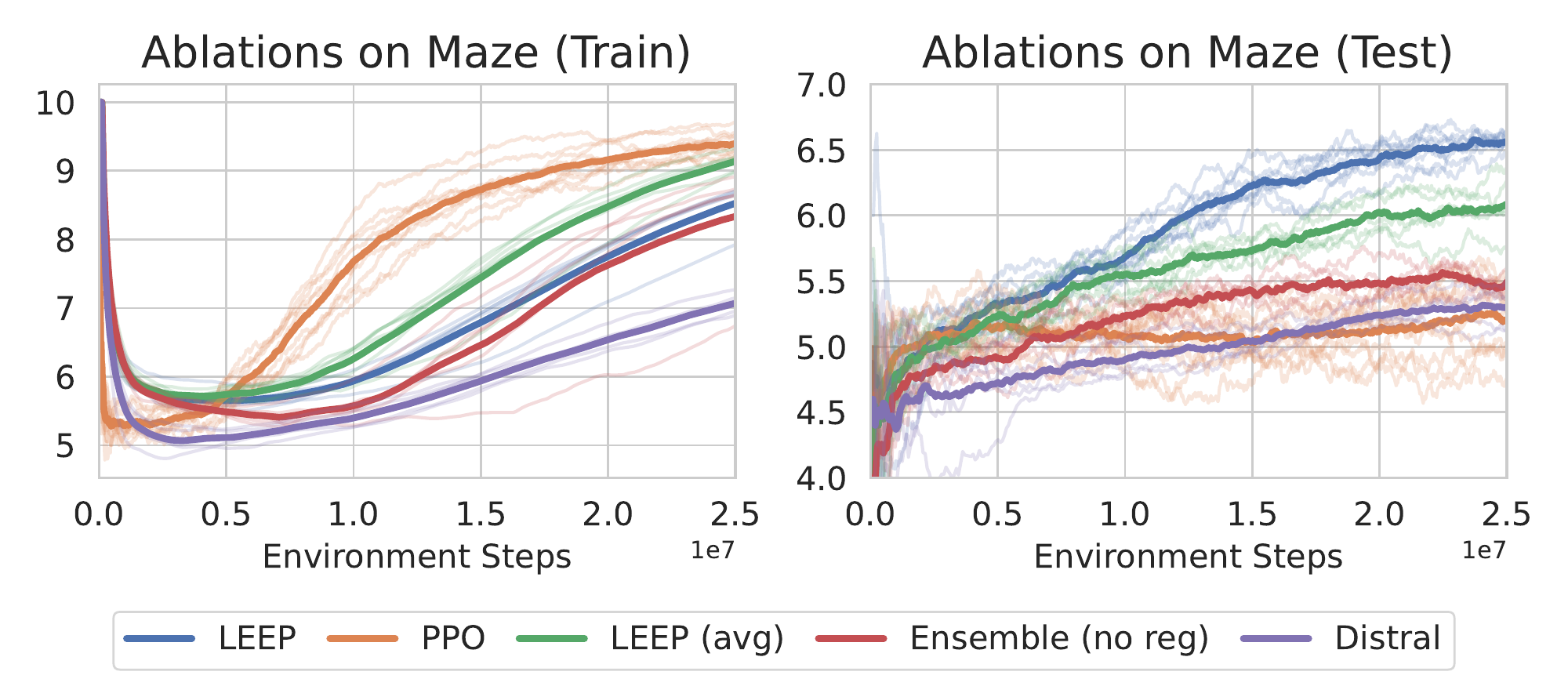}
\includegraphics[width=0.8\linewidth]{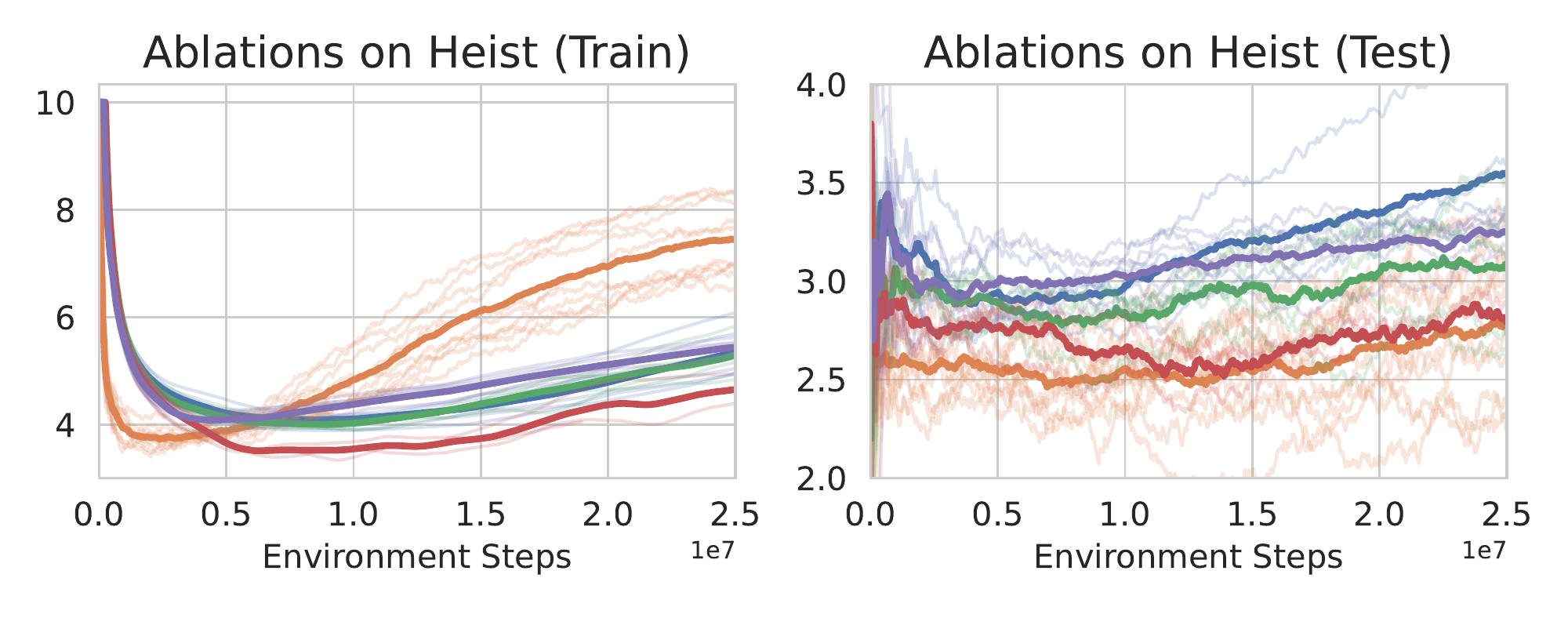}
    \caption{Training and test returns for various ablations and comparisons of LEEP.}
    \label{fig:appendix_procgen_ablations}
\end{figure}

\begin{figure}[H]
    \centering
    \includegraphics[width=\linewidth]{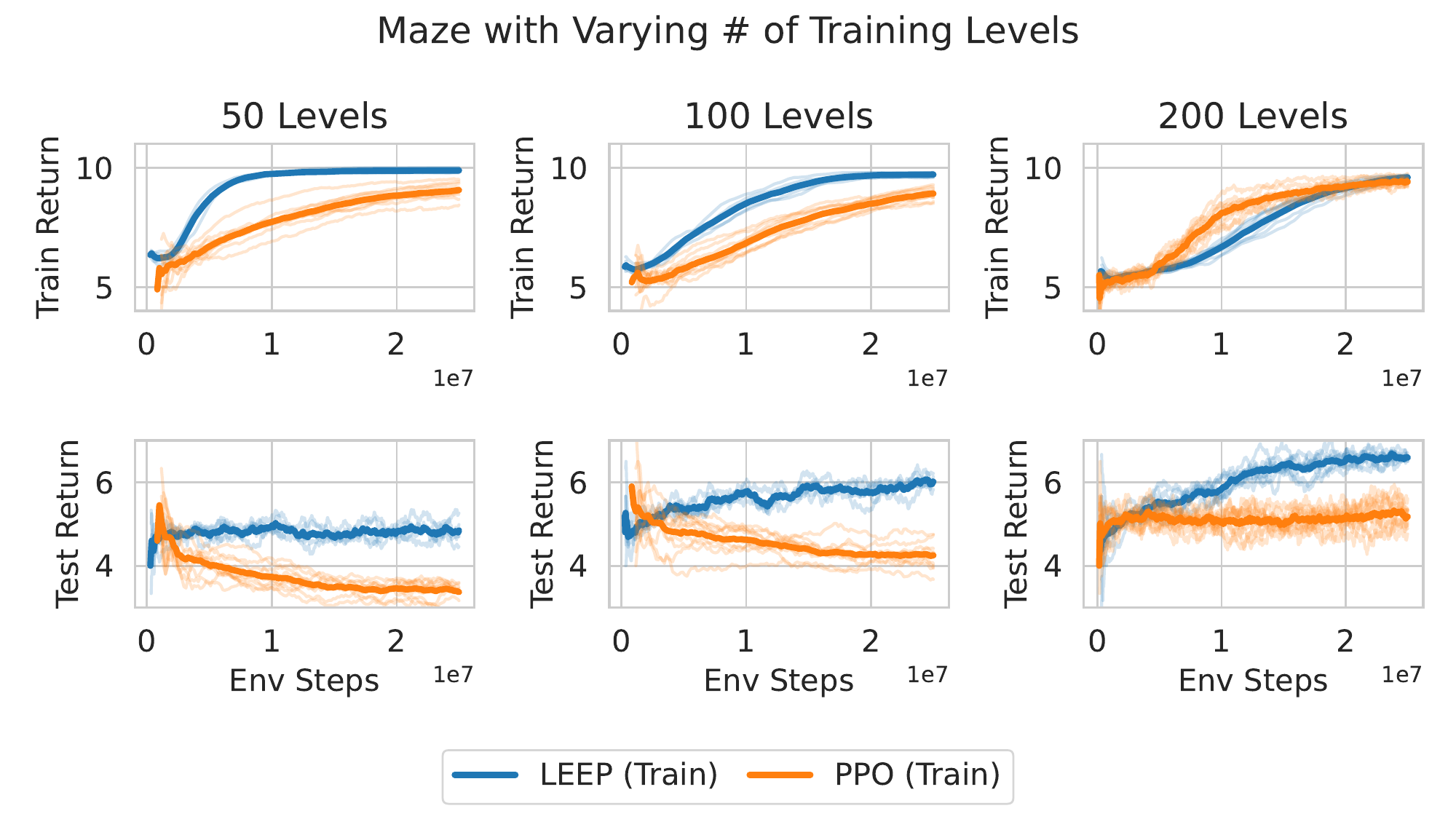}
    \caption{Performance of LEEP and PPO as the number of training levels provided varies. While the learned performance of the PPO policy is worse than a \textit{random policy} with less training levels, LEEP avoids this overfitting and in general, demonstrates a smaller train-test performance gap than PPO. }
    \label{fig:appendix_maze_varying_levels}
\end{figure}
\clearpage

\end{document}